\documentclass[journal,twocolumn,singled-spaced]{IEEEtran}

\hyphenation{op-tical net-works semi-conduc-tor}

\usepackage{makeidx}  
%

\usepackage{graphicx} 
\usepackage{subfigure} 
\usepackage{epstopdf}
\usepackage{hyperref}
\usepackage{mathtools}
\usepackage{pgfplots}

\usepackage{url}
\usepackage{lettrine}

\usepackage{algorithm}
\usepackage{algorithmic}

\usepackage{amsmath,amssymb}
\usepackage[utf8]{inputenc}
\usepackage{color}

\usepackage{tikz}
\usetikzlibrary{arrows,backgrounds,snakes}

\newcommand{\bx}{\boldsymbol{x}}

\newcommand{\bX}{\boldsymbol{X}}

\newcommand{\Yhat}{\widehat{Y}}

\DeclarePairedDelimiter\floor{\lfloor}{\rfloor}
\DeclareMathOperator*{\argmax}{argmax}

\DeclareMathOperator*{\hm}{\textsc{hm}}

\newcommand{\field}[1]{\mathbb{#1}}

\newcommand{\R}{\field{R}}

\newcommand{\E}{\field{E}}

\renewcommand{\Pr}{\field{P}}

\newcommand{\Ind}[1]{\field{I}{ \left\{{#1}\right\} }}

\newcommand{\scL}{\mathcal{L}}

\newcommand{\scF}{\mathcal{F}}
\newcommand{\scD}{\mathcal{D}}

\newcommand{\dt}{\displaystyle}

\newcommand{\wh}{\widehat}
\newcommand{\ve}{\varepsilon}
\newcommand{\hPhi}{\wh{\Phi}}
\newcommand{\Dhat}{\hPhi}
\newcommand{\Fhat}{\wh{F}}
\newcommand{\Ghat}{\wh{G}}
\newcommand{\Hhat}{\wh{H}}

\newcommand{\phat}{\wh{p}}
\newcommand{\qhat}{\wh{q}}

\newcommand{\sH}{H_{1/2}}
\newcommand{\sHhat}{\Hhat_{1/2}}

\newcommand{\bool}{\{0,1\}}

\newlength{\minipagewidth}
\setlength{\minipagewidth}{\linewidth}
\setlength{\fboxsep}{3mm}
\addtolength{\minipagewidth}{-\fboxrule}
\addtolength{\minipagewidth}{-\fboxrule}
\addtolength{\minipagewidth}{-\fboxsep}
\addtolength{\minipagewidth}{-\fboxsep}
\newcommand{\bookbox}[1]{
\par\medskip\noindent
\framebox[\linewidth]{
\begin{minipage}{\minipagewidth}
{#1}
\end{minipage} } \par\medskip }

\newlength\figureheightp 
\newlength\figurewidthp 
\setlength\figureheightp{4cm} 
\setlength\figurewidthp{4cm} 

\newlength\figureheight 
\newlength\figurewidth 
\setlength\figureheight{4cm} 
\setlength\figurewidth{4cm} 

\usetikzlibrary{datavisualization}
\tikzset{every picture/.style={font issue={\fontsize{9}{10}}},font issue/.style={execute at begin picture={#1\selectfont}}}

\newtheorem{lemma}{Lemma}
\newtheorem{theorem}{Theorem}

\newtheorem{remark}{Remark}

\newtheorem{proof}{Proof}
\newtheorem{definition}{Definition}

\begin{document}

\title{Confidence Decision Trees via Online and Active Learning for Streaming (BIG) Data}

\author{Rocco~De~Rosa\\
	{\tt\small derosa@dis.uniroma1.it}}        



\maketitle

\begin{abstract}

 Decision tree classifiers are a widely used tool in data stream mining. The use of confidence intervals to estimate the gain associated with each split leads to very effective methods, like the popular Hoeffding tree algorithm. From a statistical viewpoint, the analysis of decision tree classifiers in a streaming setting requires knowing when enough new information has been collected to justify splitting a leaf. Although some of the issues in the statistical analysis of Hoeffding trees have been already clarified, a general and rigorous study of confidence intervals for splitting criteria is missing. We fill this gap by deriving accurate confidence intervals to estimate the splitting gain in decision tree learning with respect to three criteria: entropy, Gini index, and a third index proposed by Kearns and Mansour. Our confidence intervals depend in a more detailed way on the tree parameters. We also extend our confidence analysis to a selective sampling setting, in which the decision tree learner adaptively decides which labels to query in the stream. We furnish theoretical guarantee bounding the probability that the classification is non-optimal learning the decision tree via our selective sampling strategy. Experiments on real and synthetic data in a streaming setting show that our trees are indeed more accurate than trees with the same number of leaves generated by other techniques and our active learning module permits to save labeling cost. In addition, comparing our labeling strategy with recent methods, we show that our approach is more robust and consistent respect all the other techniques applied to incremental decision trees.\footnote{This paper is an extension of the work~\cite{de2015splitting} presented at the conference IJCNN 2015.}   

\end{abstract}

\begin{IEEEkeywords}
Learning systems, Predictive models, Data streams, Incremental decision trees, Statistical learning, Semisupervised learning, Streaming data drifting.
\end{IEEEkeywords}

\IEEEpeerreviewmaketitle

\section{Introduction}

\label{s:intro}

\IEEEPARstart{S}{tream} mining algorithms are becoming increasingly attractive due to the large number of applications generating large-volume data streams. These include: email, chats, click data, search queries, shopping history, user browsing patterns, financial transactions, electricity consumption, traffic records, telephony data, and so on. In these domains, data are generated sequentially, and scalable predictive analysis methods must be able to process new data in a fully incremental fashion. 
Decision trees classifiers are one of the most widespread non-parametric classification methods. They are fast to evaluate and can naturally deal with mixed-type attributes; moreover, decision surfaces represented by small trees are fairly easy to interpret. Decision trees have been often applied to stream mining tasks ---see, e.g., the survey~\cite{ikonomovska2011learning}. In such settings, the tree growth is motivated by the need of fitting the information brought by the newly observed examples.
Starting from the pioneering work in~\cite{utgoff1989incremental}, the incremental learning of decision trees has received a lot of attention in the past 25 years. Several papers build on the idea of~\cite{MusickCR93}, which advocates the use of measures to evaluate the confidence in choosing a split. These works include Sequential ID3~\cite{Gratch95sequentialinductive}, VFDT~\cite{domingos2000mining}, NIP-H and NIP-N~\cite{jin2003efficient}. Sequential ID3 uses a sequential probability ratio test in order to minimize the number of examples sufficient to choose a good split. This approach guarantees that the tree learned incrementally is close to the one learned via standard batch learning. A similar yet stronger guarantee is achieved by the Hoeffding tree algorithm, which is at the core of the state-of-the-art VFDT system. Alternative approaches, such as NIP-H e NIP-N, use Gaussian approximations instead of Hoeffding bounds in order to compute confidence intervals. Several extensions of VFDT have been proposed, also taking into account non-stationary data sources ---see, e.g., \cite{VFDTc,UFFT,ensambleVFDT,iOVFDT,optionVFDT,hulten2001mining,kirkby2007improving,liu2009ambiguous,gama2011learning,xu2011mining,kosina2012very,salperwyck2013incremental,duda2014novel}. All these methods are based on the classical Hoeffding bound \cite{hoeffding1963probability}: after $m$ independent observations of a random variable taking values in a real interval of size $R$, with probability at least $1-\delta$ the true mean does not differ from the sample mean by more than 
\begin{equation}
\label{eq:hoef}
\ve_{\mathrm{hof}}(m,\delta)=R\sqrt{\frac{1}{2m}\ln\frac{1}{\delta}}~.
\end{equation}
The problem of computing the confidence interval for the split gain estimate can be phrased as follows: we are given a set of unknown numbers $G$ (i.e., the true gains for the available splits) and want to find the largest of them. We do that by designing a sample-based estimator $\Ghat$ of each $G$, and then use an appropriate version of the Hoeffding bound to control the probability that $\bigl|\Ghat-G\bigr| > \ve$ for any given $\ve > 0$. It is easy to see that this allows to pick the best split at any given node: assume that $\Ghat_{F}$ is the highest empirical gain (achieved by the split function $F$) and $\Ghat_{F_2}$ is the second-best (achieved by the split function $F_2$). If $\Ghat_{F}-\Ghat_{F_2} > 2\ve$ then with probability at least $1-\delta$ the split function $F$ is optimal\footnote{In the original work VFDT~\cite{domingos2000mining} $\Ghat_{F}-\Ghat_{F_2}>\ve$ is erroneously used.} ---see Figure~\ref{fig:conf_split}.
\begin{figure}[ht!]
\centering
\begin{tikzpicture}[thick, framed]
    \draw (4.5,0.5)[line width=2pt] -- (6.5,0.5);
    \draw (4.5,0.39)[line width=2pt] -- (4.5,0.61) node[above]{$\Ghat_{F}-\ve$};
    \draw (6.5,0.39)[line width=2pt] -- (6.5,0.61) node[above]{$\Ghat_{F}+\ve$};
    \filldraw[ball color=red!80,shading=ball] (5.5,0.5) circle
        (0.06cm) node[above]{$\Ghat_{F}$};
  
    \draw (1,1.5)[line width=2pt] -- (4,1.5);
      \draw (1,1.39)[line width=2pt] -- (1,1.61) node[above]{$\Ghat_{F_2}-\ve$};
      \draw (4,1.39)[line width=2pt] -- (4,1.61) node[above]{$\Ghat_{F_2}+\ve$};
      \filldraw[ball color=red!80,shading=ball] (2.5,1.5) circle
          (0.06cm) node[above]{$\Ghat_{F_2}$};
    \draw (4,2.1) node[above,xshift=0.1cm]{$ \textsc{Confident Split}$};%
\end{tikzpicture}
\caption{\label{fig:conf_split} The condition $\Ghat_{F}-\Ghat_{F_2} > 2\ve$ guarantees that the confidence intervals for the true gains $G_F$ and $G_{F_2}$ are non-overlapping.}
\end{figure}
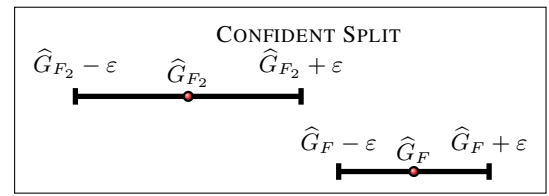
Although all methods in the abovementioned literature use the Hoeffding bound~(\ref{eq:hoef}) to compute the confidence intervals for the splits, we show here that the standard entropy-like criteria require a different approach.

The rest of the paper is organized as follows. Section~\ref{s:rw} discusses related work. In Section~\ref{s:prel} we state the basic decision tree learning concepts and introduce the notation used in the rest of the paper. In Section~\ref{s:main} we derive the new bounds for the splitting criteria. In Section~\ref{s:apps} we apply the confidence bounds to the incremental learning of a decision tree and derive a formal guarantee (Theorem~\ref{th:optim}) on the probability that examples in the stream are classified using suboptimal splits based on any of the three splitting criteria. These theoretical guidelines are empirically tested in Section~\ref{s:exp}, where we show that our more refined bounds deliver better splits that the splits performed by the other techniques. In Section~\ref{s:selective} we develop a selective sampling version of our algorithm using the new confidence bounds.  In this setting, the learner has the possibility of adaptively subsampling the labels of examples in the stream. In other words, everytime a new example arrives, the learner may decide to save the cost of obtaining the label. Note that the learner's predictive performance is nevertheless evaluated on the entire stream, including the examples whose label remains unknown. The approach we propose is based on using the purity of a leaf to decide (at some confidence level) whether its classification is optimal. In Section~\ref{s:SS_exp} we compare our labeling strategy with a recent baseline. Section~\ref{s:concl} concludes the paper.

\section{Related work}
\label{s:rw}
The problem of computing the confidence interval for the splitting gain estimate has two main source of difficulties: First, splitting criteria ---like entropy or Gini index--- are nonlinear functions of the distribution at each node. Hence, appropriate large deviation bounds (such as the McDiarmid bound~\cite{mcdiarmid1989method}) must be used. As the McDiarmid bound controls the \emph{deviations} $\bigl|\Ghat-\E\Ghat\bigr|$, further work is needed to control the \emph{bias} $\bigl|\E\Ghat-G\bigr|$. The first problem was solved (for entropy and Gini) in~\cite{rutkowski2012decision}. The authors used McDiarmid bound to derive estimates of confidence intervals for various split measures. For instance, in a problem with $K$ classes, the bound on the confidence interval for the entropy gain criterion is
\begin{align}
\label{eq:mc}
& \ve_{\mathrm{md}}(m,\delta) = C(K,m)\sqrt{\frac{1}{2m}\ln\frac{1}{\delta}}
\end{align}
where $C(K,m) = 6\bigl(K\log_2 e + \log_2 2m\bigr) + 2\log_2 K$.
The authors proposed to replace Hoeffding bound~(\ref{eq:hoef}) by McDiarmid bound~(\ref{eq:mc}) in the VFDT algorithm and its successors. However, although this allows to control the deviations, the bias of the estimate is ignored.
More recently, in~\cite{duda2014novel} the same authors apply the Hoeffding bound to the entropy splitting criterion, focusing on binary trees and binary classification. They decompose the entropy gain calculation in three components, and apply the Hoeffding bound to each one of them, obtaining a confidence interval estimate for the splitting gains. However, this still ignores the bias of the estimate and, besides, the authors do not consider other types of split functions. 
The work~\cite{matuszyk2013correcting} directly uses the classification error as splitting criterion rather than a concave approximation of it (like the entropy or the Gini index). Though this splitting criterion can be easily analyzed via the Hoeffding bound, its empirical performance is generally not very good ---see Section~\ref{s:main} for more discussion on this.

In this work, we significantly simplify the approach of~\cite{rutkowski2012decision} and extend it to a third splitting criterion. Moreover, we also solve the bias problem, controlling the deviations of $\Ghat$ from the real quantity of interest (i.e., $G$ rather than $\E\Ghat$). Moreover, unlike~\cite{matuszyk2013correcting} and~\cite{duda2014novel}, our bounds apply to the standard splitting criteria. Our analysis shows that the confidence intervals associated with the choice of a suboptimal split not only depend on the number of leaf examples $m$ ---as in bounds~(\ref{eq:hoef}) and~(\ref{eq:mc})--- but also on other problem dependent parameters, as the dimension of the feature space, the depth of the leaves, and the overall number of examples seen so far by the algorithm. As revealed by the experiments in Section~\ref{s:contr_exp}, this allows a more cautious and accurate splitting in complex problems. Furthermore, we point out that our technique can be easily applied to all extensions of VFDT (see Section~\ref{s:intro}) yielding similar improvements, as these extensions all share the same Hoeffding-based confidence analysis as the Hoeffding tree algorithm.

Standard decision tree learning approaches assume that all training instances are labeled and available beforehand. In a true incremental learning setting, instead, in which the classifier is asked to predict the label of each incoming sample, active learning techniques allows us to model the interaction between the learning system and the labeler agent (typically, a human annotator) \cite{settles2012active}. More specifically, such techniques help the learner select a small number of instances for which the annotator should be invoked in order to obtain the true label. The overall goal is to maximize predictive accuracy (measured on the entire set of predicted samples, irrespective to whether the label was queried or not) at any given percentage of queried labels. Recently, in the work~\cite{zliobaite2014active} is showed a general active learning framework which is applied to Hoeffding trees. They present different strategies to annotate the samples considering the output leaf class probabilities. These techniques rely on the class probability estimates at the leaves level without considering confidence-based techniques, that is they do not consider if the estimates are supported by a small or large number of sample labels. In Section~\ref{s:selective} we develop a selective sampling version of our algorithm using the new confidence bounds. The approach we propose is based on using the purity of a leaf to decide (at some confidence level) whether its classification is optimal. Labels of such leaves are then queried at a very small rate, dictated by the confidence level that is let to increase with time and relevant leaf statistics. In the experimental results showed in Section~\ref{s:SS_exp} is clear that our confidence-based strategy is more robust than the strategy described in~\cite{zliobaite2014active} applied to Decision Trees.

\section{Batch Decision Tree Learning}
\label{s:prel}
For simplicity we only consider binary classification problems. The goal is to find a function $f(\bX)$ assigning the correct category $Y=\{0,1\}$ to a new instance $\bX$. We consider binary decision trees based on a class $\scF$ of binary split functions $F : \R^d\to\bool$, that is, the test functions through which the feature space is partitioned\footnote{
Although we only considered binary classification and binary splits, our techniques can be potentially extended to multi-class classification and general splits.
}.
Training examples $(\bX_1,Y_1),(\bX_2,Y_2),\ldots\in\R^d\times\bool$ are i.i.d.\ draws from a fixed but unknown probability distribution. Decision tree classifiers are typically constructed in an incremental way, starting from a tree consisting of a single node. The tree grows through a sequence of splitting operations applied to its leaves. If a split is decided for a leaf $i$, then the leaf is assigned some split function $F \in \scF$ and two nodes $i_0$ and $i_1$ are added to the tree as children of the split node. Examples are recursively routed throught the tree starting from the root as follows: when an example $(\bX_t,Y_t)$ reaches an internal node $i$ with split function $F$, then it is routed to child $i_0$ if $F(\bX_t) = 0$ and to child $i_1$ otherwise.
A decision tree $T$ induces a classifier $f_T : \R^d\to\bool$. The prediction $f_T(\bX)$ of this classifier on an instance $\bX$ is computed by routing the instance $\bX$ through the tree until a leaf is reached. We use $\bX \to i$ to indicate that $\bX$ is routed to the leaf $i$. Then $f_T(\bX)$ is set to the most frequent label $y=\arg \max_{y} \Pr(Y=y|\bX \to i)$ among the labels of all observed examples that reach that leaf.
The goal of the learning process is to control the binary classification risk $\Pr\bigl( f_T(\bX) \neq Y \bigr)$ of $f_T$. For any leaf $i$, let $Y_{|i}=\Pr(Y|\bX \to i)$ be the random variable denoting the label of a random instance $\bX$ given that $\bX \to i$. Let $\scL(T)$ be the leaves of $T$. The risk of $f_T$ can then be upper bounded, with the standard bias-variance decomposition, as follows
\begin{align*}
    \Pr&\bigl( f_T(\bX) \neq Y \bigr)
\\ &\le
	   \overbrace{ \sum_{i \in \scL(T)}  \Pr\bigl(Y \neq y^*_i \mid \bX \to i \bigr) \Pr(\bX \to i) }^{\mbox{bias error}}
\\ &+
   \overbrace{ \sum_{i \in \scL(T)}\Pr\bigl(f_T(\bX) \neq y^*_i \mid \bX \to i\bigr)\Pr(\bX \to i)}^{\mbox{variance error}}
\end{align*}
where $y^*_i = \Ind{\Pr(Y = 1 \mid \bX \to i) \geq \tfrac{1}{2}}$ is the optimal label\footnote{The label assigned by the Bayes optimal  classifier in the leaf partition.} for leaf $i$ and $\Ind\cdot$ is the indicator function of the event at argument. The \emph{variance} terms 
are the easiest to control: $f_T(\bX)$ is determined by the most frequent label of the leaf $i$ such that $\bX \to i$. Hence, conditioned on $\bX \to i$, the event $f_T(\bX) = y^*_i$ holds with high probability whenever the confidence interval for the estimate of $y^*_i$ does not cross the $\tfrac{1}{2}$ boundary\footnote{This confidence interval shrinks relatively fast, as dictated by Hoeffding bound applied to the variable $y^*_i\in\{0,1\}$.}. The bias terms
compute the Bayes error at each leaf. The error vanishes quickly when good splits for expanding the leaves are available. However, due to the large number of available split functions $F$, the confidence intervals for choosing such good splits shrink slower than the confidence interval associated with the bias error. Our Theorem~\ref{th:optim} accurately quantifies the dependence of the split confidence on the various problem parameters.
Let $\Psi(Y)$ be a shorthand for $\min\bigl\{\Pr(Y=0),\Pr(Y=1)\bigr\}$. Every time a leaf $i$ is split using $F$, the term $\Psi(Y_{|i})$ gets replaced by
\begin{align*}
    \Pr&\bigl(F = 0 \mid \bX \to i \bigr)\Psi\bigl(Y_{|i} \mid F=0 \bigr)
\\ &
    + \Pr\bigl(F = 1  \mid \bX \to i \bigr)\Psi\bigl(Y_{|i} \mid F=1 \bigr)
\end{align*}
corresponding to the newly added leaves (here and in what follows, $F$ also stands for the random variable $F(\bX)$). The concavity of $\min$ ensures that no split of a leaf can ever make that sum bigger. Of course, we seek the split maximizing the risk decrease (or ``gain''),
\begin{align*}
    \Psi(Y_{|i}) &- \Pr\bigl(F = 0 \mid \bX \to i \bigr)\Psi\bigl(Y_{|i} \mid F=0\bigr)
\\&-
    \Pr\bigl(F = 1 \mid \bX \to i \bigr)\Psi\bigl(Y_{|i} \mid F=1 \bigr)~.
\end{align*}
In practice, splits are chosen so to approximately maximize a gain functional defined in terms of a concave and symmetric function $\Phi$, which bounds from the above the min function $\Psi$ (used in~\cite{matuszyk2013correcting} as splitting criterion). The curvature of $\Phi$ helps when comparing different splits, as opposed to $\Psi$ which is piecewise linear. Indeed $\Psi$ gives nonzero gain only to splits generating leaves with disagreeing majority labels ---see, e.g., \cite{dietterich1996applying} for a more detailed explanation. Let $Z$ be a Bernoulli random variable with parameter $p$. Three gain functions used in practice are: the scaled binary entropy $\sH(Z) = -\tfrac{p}{2}\ln p - \tfrac{1-p}{2}\ln(1-p)$ used in C4.5; the Gini index $J(Z) = 2p(1-p)$ used in CART, and the function $Q(Z) = \sqrt{p(1-p)}$ introduced by Kearns and Mansour in~\cite{kearns1996boosting} and empirically tested in~\cite{dietterich1996applying}.
Clearly, the binary classification risk can be upper bounded in terms of any upper bound $\Phi$ on the min function $\Psi$,
\begin{align*}
    \Pr\bigl( f_T(\bX) \neq Y \bigr)
&\le
    \!\!\sum_{i \in \scL(T)} \!\!\! \Phi(Y_{|i}) \Pr(\bX \to i)\\
&+
    \!\!\sum_{i \in \scL(T)} \!\!\! \Pr\bigl(f_T(\bX) \neq y^*_i \mid \bX \to i\bigr)\Pr(\bX \to i)~.
\end{align*}
The gain for a split $F$ at node $i$, written in terms of a generic entropy-like function $\Phi$, takes the form
\begin{align}
\nonumber
     G_{i,F}
&=
    \Phi(Y_{|i}) - \Phi\bigl(Y_{|i} \mid F\bigr)\\
\nonumber
&=
    \Phi(Y_{|i}) - \Pr\bigl(F = 0 \mid \bX \to i \bigr)\Phi(Y_{|i} \mid F = 0)
\\ &
\label{eq:gain}
    - \Pr(F = 1 \mid \bX \to i )\Phi(Y_{|i} \mid F = 1)~.
\end{align}
Now, in order to choose splits with a high gain (implying a significant reduction of risk), we must show that $G_{i,F}$ (for the different choices of $\Phi$) can be reliably estimated from the training examples. In this work we focus on estimates for choosing the best split $F$ at any given leaf $i$. Since the term $\Phi(Y_{|i})$ in $G_{i,F}$ is invariant with respect to this choice, we may just ignore it when estimating the gain.\footnote{
Note that, for all functions $\Phi$ considered in this paper, the problem of estimating $\Phi(Y_{|i})$ can be solved by applying essentially the same techniques as the ones we used to estimate $\Phi\bigl(Y_{|i} \mid F\bigr)$.
}

\section{Confidence Bound For Split Functions}
\label{s:main}
In this section we compute estimates $\Dhat_{i\mid F}$ of $\Phi(Y_{|i} \mid F)$ for different choices of $\Phi$, and compute confidence intervals for these estimates. As mentioned in Section~\ref{s:intro}, we actually bound the deviations of $\Dhat_{i\mid F}$ from $\Phi(Y_{|i} \mid F)$, which is the real quantity of interest here. Due to the nonlinearity of $\Phi$, this problem is generally harder than controlling the deviations of $\Dhat_{i\mid F}$ from its expectation $\E\,\Dhat_{i\mid F}$ ---see, e.g., \cite{rutkowski2012decision} for weaker results along these lines.
In the rest of this section, for each node $i$ and split $F$ we write $p_k = \Pr(Y = 1, F = k)$ and $q_k = 1 - p_k$ for $k \in \bool$; moreover, we use $\phat_k$, $\qhat_k$ to denote the empirical estimates of $p_k$, $q_k$.

\subsection{Bound for the entropy}
Let $\Phi(Z)$ be the (scaled) binary entropy $\sH(Z) = -\tfrac{p}{2}\ln p - \tfrac{1-p}{2}\ln(1-p)$ for $Z$ Bernoulli of parameter $p$. In the next result, we decompose the conditional entropy as a difference between entropies of the joint and the marginal distribution. Then, we apply standard results for plug-in estimates of entropy.
\bookbox{
\begin{theorem}
\label{th:entropy}
Pick a node $i$ and route $m$ i.i.d.\ examples $(\bX_t,Y_t)$ to $i$. For any $F\in\scF$, let
\[
    \Dhat_{i\mid F} = \sHhat(Y_{|i},F) - \sHhat(F)
\]
where $\sHhat$ denotes the empirical scaled entropy (i.e., the scaled entropy of the empirical measure defined by the i.i.d.\ sample). Then, for all $\delta > 0$,
\begin{align}
    \nonumber
    \Bigl| \Dhat_{i \mid F} - \sH(Y_{|i} \mid F) \Bigr| \le \ve_{\mathrm{ent}}(m,\delta)
    \\ \text{where} \quad \ve_{\mathrm{ent}}(m,\delta)=(\ln m)\sqrt{\frac{2}{m}\ln\frac{4}{\delta}} + \frac{2}{m}
\end{align}
with probability at least $1-\delta$ over the random draw of the $m$ examples.
\end{theorem}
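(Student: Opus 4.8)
The plan is to exploit the chain rule for (scaled) Shannon entropy so as to reduce the claim to the bias and concentration of two ordinary plug-in entropy estimators. First I would note that multiplying every entropy by the common factor $\tfrac12$ preserves additive identities, so the chain rule $\sH(Y_{|i}\mid F) = \sH(Y_{|i},F) - \sH(F)$ holds, where $\sH(Y_{|i},F)$ is the scaled entropy of the joint law of $(Y,F)$ on the four-symbol alphabet $\bool\times\bool$ and $\sH(F)$ is the scaled entropy of the binary marginal. Hence the estimator $\Dhat_{i\mid F} = \sHhat(Y_{|i},F) - \sHhat(F)$ is exactly the plug-in estimate of $\sH(Y_{|i}\mid F)$, and the triangle inequality splits the target deviation into a \emph{bias} part $\bigl|\E\Dhat_{i\mid F} - \sH(Y_{|i}\mid F)\bigr|$ and a \emph{fluctuation} part $\bigl|\Dhat_{i\mid F} - \E\Dhat_{i\mid F}\bigr|$.

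For the bias I would invoke the standard fact that the plug-in entropy of an $S$-symbol source underestimates the true entropy in expectation (by concavity and Jensen) by at most $\ln\!\bigl(1+\tfrac{S-1}{m}\bigr)\le\tfrac{S-1}{m}$. Applying this to the joint ($S=4$) and to the marginal ($S=2$), and keeping the factor $\tfrac12$, the two one-sided gaps enter $\E\Dhat_{i\mid F}-\sH(Y_{|i}\mid F)$ with opposite signs, so that this difference lies in $\bigl[-\tfrac{3}{2m},\,\tfrac{1}{2m}\bigr]$; in particular the total bias is at most $\tfrac{2}{m}$, which is precisely the additive term of $\ve_{\mathrm{ent}}(m,\delta)$, with a little slack to spare.

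For the fluctuation the tool is McDiarmid's bounded-difference inequality, applied separately to $\sHhat(Y_{|i},F)$ and $\sHhat(F)$ as functions of the $m$ i.i.d.\ examples. The key computation is the bounded-difference coefficient: writing a plug-in entropy of counts as $\ln m - \tfrac1m\sum_j n_j\ln n_j$ and resampling one example (which lowers one count by one and raises another by one), the increments of $n\mapsto n\ln n$ are controlled by $\ln m$, so each scaled empirical entropy changes by a quantity of order $\tfrac{\ln m}{m}$, and the two coefficients sum to $\tfrac{2\ln m}{m}$. Allocating confidence $\delta/2$ to each two-sided entropy event produces the $4/\delta$ inside the logarithm, and since both McDiarmid thresholds carry the common factor $\sqrt{\tfrac{m}{2}\ln\tfrac{4}{\delta}}$, their sum collapses to $\tfrac{2\ln m}{m}\sqrt{\tfrac{m}{2}\ln\tfrac{4}{\delta}} = (\ln m)\sqrt{\tfrac{2}{m}\ln\tfrac{4}{\delta}}$. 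Combining this with the $\tfrac{2}{m}$ bias through the triangle inequality gives $\ve_{\mathrm{ent}}(m,\delta)$ and closes the argument.

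I expect the main obstacle to be pinning down the bounded-difference coefficient of the plug-in entropy sharply enough that McDiarmid reproduces the exact constants: the elementary bound on $n\mapsto n\ln n$ carries additive $+1$ terms, and the joint and marginal alphabets differ in size, so reaching the clean expression $(\ln m)\sqrt{\tfrac{2}{m}\ln\tfrac{4}{\delta}}+\tfrac{2}{m}$ rather than a messier one requires care, most likely absorbing the lower-order $O(1/m)$ contributions of the coefficients into the slack left in the $\tfrac{2}{m}$ bias estimate.
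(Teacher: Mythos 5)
Your proposal follows essentially the same route as the paper's proof: the chain-rule decomposition $\Dhat_{i\mid F}=\sHhat(Y_{|i},F)-\sHhat(F)$, a McDiarmid bounded-difference bound at confidence $\delta/2$ per entropy with scaled coefficient of order $(\ln m)/m$ (the paper cites Antos--Kontoyiannis, whose concentration result is exactly this McDiarmid computation), and the standard plug-in bias bound $-\ln\bigl(1+\tfrac{N-1}{m}\bigr)\le\E\,\Hhat(Z)-H(Z)\le 0$ (the paper cites Paninski) with $N=4$ and $N=2$, giving $\tfrac{3}{2m}+\tfrac{1}{2m}=\tfrac{2}{m}$. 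Your constants and the $4/\delta$ bookkeeping check out, and the worry you flag about the additive $+1$ in the increments of $n\mapsto n\ln n$ is harmless: the resulting scaled coefficient $\tfrac{\ln m+1}{2m}$ is already below your claimed $\tfrac{\ln m}{m}$ for all $m\ge 3$, while for $m\le 2$ the stated bound holds trivially.
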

}
\begin{proof}
In appendix~\ref{pr:entropy}.
\end{proof}

\subsection{Bound for the Gini index}
In the Bernoulli case, the Gini index takes the simple form $J(Z) = 2p(1-p)$ for $Z$ Bernoulli of parameter $p$. First we observe that $J(Y_{|i} \mid F)$ is the sum of harmonic averages, then we use the McDiarmid inequality to control the variance of the plug-in estimate for these averages.
\bookbox{
\begin{theorem}
\label{th:gini}
Pick a node $i$ and route $m$ i.i.d.\ examples $(\bX_t,Y_t)$ to $i$. For any $F\in\scF$, let
\[
    \Dhat_{i \mid F} = \hm(\phat_1,\qhat_1) + \hm(\phat_0,\qhat_0)
\]
where $\hm$ denotes the harmonic mean $\hm(p,q) = \tfrac{2pq}{p+q}$. Then, for all $\delta > 0$
\begin{align}
	\nonumber
    \Bigl| \Dhat_{i \mid F} - J(Y_{|i} \mid F) \Bigr| \le \ve_{\mathrm{Gini}}(m,\delta)
    \\ \text{where} \quad \ve_{\mathrm{Gini}}(m,\delta)=\sqrt{\frac{8}{m}\ln\frac{2}{\delta}} + 4\sqrt{\frac{1}{m}}
\end{align}
with probability at least $1-\delta$ over the random draw of the $m$ examples.
\end{theorem}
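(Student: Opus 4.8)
The plan is to follow the two-step recipe announced just before the statement: first rewrite the population quantity $J(Y_{|i}\mid F)$ exactly as a sum of two harmonic means, and then bound the plug-in estimator $\Dhat_{i\mid F}$ in two pieces — its deviation from its expectation (via McDiarmid) and its bias. Writing $p_k=\Pr(Y=1,F=k)$ and $q_k=\Pr(Y=0,F=k)$, so that $\Pr(F=k)=p_k+q_k$, I would unfold the definition of the conditional Gini index: $\Pr(F=k)\,J(Y_{|i}\mid F=k)=2\,\Pr(F=k)\,\Pr(Y=1\mid F=k)\,\Pr(Y=0\mid F=k)$, and cancelling the common factor $\Pr(F=k)$ turns this into $\tfrac{2p_kq_k}{p_k+q_k}=\hm(p_k,q_k)$. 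Summing over $k\in\bool$ gives $J(Y_{|i}\mid F)=\hm(p_1,q_1)+\hm(p_0,q_0)$, which is precisely the population analogue of $\Dhat_{i\mid F}$, so the whole problem reduces to concentration of the two plug-in harmonic means around their population values.

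For the deviation term I would apply McDiarmid to $\Dhat_{i\mid F}$ viewed as a function of the $m$ i.i.d.\ examples through the four cell counts. The technical fact driving everything is that $\hm$ has gradient bounded by $2$ on the nonnegative quadrant, since $\partial_p\hm=2q^2/(p+q)^2\in[0,2]$ and symmetrically in $q$; hence $\hm$ is $2$-Lipschitz in each argument. Replacing a single example changes at most two of the four empirical masses $\phat_0,\qhat_0,\phat_1,\qhat_1$ by $1/m$ each, so that across the at most two affected harmonic means $\Dhat_{i\mid F}$ moves by at most $4/m$. The bounded-differences constants are therefore $c_t=4/m$ with $\sum_t c_t^2=16/m$, and McDiarmid gives $\Pr\bigl(|\Dhat_{i\mid F}-\E\Dhat_{i\mid F}|\ge\eta\bigr)\le 2e^{-m\eta^2/8}$; setting the right-hand side to $\delta$ yields the first term $\sqrt{\tfrac{8}{m}\ln\tfrac{2}{\delta}}$.

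What remains, and what the paper stresses is the genuine contribution beyond controlling deviations around the mean, is the bias $\bigl|\E\Dhat_{i\mid F}-J(Y_{|i}\mid F)\bigr|$. Here I would again invoke $2$-Lipschitzness to write $|\hm(\phat_k,\qhat_k)-\hm(p_k,q_k)|\le 2|\phat_k-p_k|+2|\qhat_k-q_k|$, take expectations, and bound each mean absolute deviation by the standard deviation: $\E|\phat_k-p_k|\le\sqrt{\Var\phat_k}=\sqrt{p_k(1-p_k)/m}\le\tfrac{1}{2\sqrt m}$, using that $m\phat_k$ is $\mathrm{Binomial}(m,p_k)$, and likewise for $\qhat_k$. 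Each $k$ then contributes at most $2\cdot\tfrac{1}{2\sqrt m}+2\cdot\tfrac{1}{2\sqrt m}=\tfrac{2}{\sqrt m}$, for a total bias of $4\sqrt{1/m}$, the second term of $\ve_{\mathrm{Gini}}$. A final triangle inequality combining the McDiarmid deviation with this bias closes the argument.

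The hard part will be the bounded-differences computation rather than the bias: one must verify that $\hm$ stays uniformly $2$-Lipschitz even where a cell mass is near zero (the gradient formula exhibits no blow-up, and $\hm$ extends continuously by $\hm(0,0)=0$), and one must correctly account for the fact that an example switching the value of $F$ perturbs \emph{both} harmonic means, not one, which is what fixes the constant at $4/m$ and hence the $8$ inside the square root. It is worth noting that the bias bound needs only the Lipschitz property together with the $O(1/\sqrt m)$ mean absolute deviation of a binomial proportion, so no Jensen argument is required — although concavity of $\hm$ does independently confirm that the plug-in estimate is biased in a single direction.
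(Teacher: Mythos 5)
Your proposal is correct and takes essentially the same route as the paper's own proof: the exact identity $J(Y_{|i}\mid F)=\hm(p_1,q_1)+\hm(p_0,q_0)$, McDiarmid's inequality with bounded-differences constant $c=4/m$ yielding the deviation term $\sqrt{\tfrac{8}{m}\ln\tfrac{2}{\delta}}$, and a bias bound of $\tfrac{4}{\sqrt{m}}$ from $\E\bigl|\phat_k-p_k\bigr|\le\sqrt{p_k(1-p_k)/m}\le\tfrac{1}{2\sqrt{m}}$ for each of the four binomial cell frequencies. The only (harmless) difference is in the verification details: you certify both the bounded differences and the bias via the coordinate-wise $2$-Lipschitzness of $\hm$ (gradient $2q^2/(p+q)^2\le 2$), whereas the paper checks the per-example perturbation directly on the integer counts, showing $\bigl|\tfrac{rs}{r+s}-\tfrac{r's'}{r'+s'}\bigr|\le 1$, and controls the bias through an exact algebraic expansion of $\hm(p_k,q_k)-\E\bigl[\hm(\phat_k,\qhat_k)\bigr]$ (using concavity only to fix its sign) — both verifications produce identical constants.
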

}
\begin{proof}
In appendix~\ref{pr:giny}.
\end{proof}

\subsection{Bound for the Kearns-Mansour index}
The third entropy-like function we analyze is $Q(Z) = \sqrt{p(1-p)}$ for $Z$ Bernoulli of parameter $p$. The use of this function was motivated in~\cite{kearns1996boosting} by a theoretical analysis of decision tree learning as a boosting procedure. See also~\cite{takimoto2003top} for a simplified analysis and some extensions.

In this case McDiarmid inequality is not applicable and we control $Q(Y_{|i} \mid F)$ using a direct argument based on classical large deviation results.
\bookbox{
\begin{theorem}
\label{th:squareroot}
Pick a node $i$ and route $m$ i.i.d.\ examples $(\bX_t,Y_t)$ to $i$. 
For any $F\in\scF$, let
\[
    \Dhat_{i \mid F} = \sqrt{\phat_1\qhat_1} + \sqrt{\phat_0\qhat_0}~.
\]
Then, for all $\delta > 0$,
\begin{align}
  \nonumber
    \Bigl| \Dhat_{i \mid F} - Q(Y_{|i} \mid F) \Bigr|
\le \ve_{\mathrm{KM}}(m,\delta)
 \\ \text{where} \quad \ve_{\mathrm{KM}}(m,\delta)=4\sqrt{ \frac{1}{m}\ln\frac{8}{\delta} }
\end{align}
with probability at least $1-\delta$ over the random draw of the $m$ examples.
\end{theorem}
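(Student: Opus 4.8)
The plan is to first rewrite the target $Q(Y_{|i}\mid F)$ in terms of joint probabilities and then reduce the whole problem to four independent one-dimensional deviation estimates. Writing $p_k=\Pr(Y=1,F=k\mid\bX\to i)$ and $q_k=\Pr(Y=0,F=k\mid\bX\to i)$, the definition of the conditional gain functional gives $\Pr(F=k)\,Q(Y_{|i}\mid F=k)=\sqrt{p_k q_k}$, so that $Q(Y_{|i}\mid F)=\sqrt{p_1 q_1}+\sqrt{p_0 q_0}$ and $\Dhat_{i\mid F}$ is exactly its plug-in estimate. A triangle inequality splits the error into the two summands $\bigl|\sqrt{\phat_k\qhat_k}-\sqrt{p_k q_k}\bigr|$; writing $\sqrt{\phat_k\qhat_k}=\sqrt{\phat_k}\sqrt{\qhat_k}$, adding and subtracting $\sqrt{\phat_k}\sqrt{q_k}$, and bounding the leftover factors $\sqrt{\phat_k},\sqrt{q_k}\le 1$ reduces everything to
\[
\bigl|\Dhat_{i\mid F}-Q(Y_{|i}\mid F)\bigr|\le\sum_{k\in\bool}\Bigl(\bigl|\sqrt{\phat_k}-\sqrt{p_k}\bigr|+\bigl|\sqrt{\qhat_k}-\sqrt{q_k}\bigr|\Bigr),
\]
a sum of four terms, each measuring how far the square root of a single empirical Bernoulli frequency (built from $m$ i.i.d.\ indicators) lies from the square root of its mean.

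The crux is therefore a sharp concentration bound for $\bigl|\sqrt{\phat}-\sqrt p\bigr|$, and this is exactly where the naive tools fail: since $x\mapsto\sqrt x$ is only H\"{o}lder-$\tfrac12$ continuous near the origin, the elementary estimate $\bigl|\sqrt{\phat}-\sqrt p\bigr|\le\sqrt{|\phat-p|}$ combined with Hoeffding~(\ref{eq:hoef}) yields only an $(1/m)^{1/4}$ rate, and for the same reason (an unbounded derivative at $0$) the McDiarmid route used for the Gini index in Theorem~\ref{th:gini} is unavailable. The fix is to exploit the variance-stabilizing property of the square root through a direct large-deviation argument. I would use the Chernoff bound $\Pr(\phat\ge a)\le e^{-m\RE{a}{p}}$ together with its lower-tail analogue, choose $a=(\sqrt p\pm\ve)^2$ so that $|\sqrt a-\sqrt p|=\ve$, and lower bound the relative entropy by the squared Hellinger gap, $\RE{a}{p}\ge(\sqrt a-\sqrt p)^2+(\sqrt{1-a}-\sqrt{1-p})^2\ge\ve^2$. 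This produces the clean, $p$-free tail $\Pr\bigl(\bigl|\sqrt{\phat}-\sqrt p\bigr|\ge\ve\bigr)\le 2e^{-m\ve^2}$, which is the genuine $1/\sqrt m$ behaviour we need.

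To finish, I would apply this tail to each of the four quantities $\sqrt{\phat_0},\sqrt{\phat_1},\sqrt{\qhat_0},\sqrt{\qhat_1}$ and take a union bound, so that with probability at least $1-8e^{-m\ve^2}$ all four deviations are simultaneously at most $\ve$. Setting $\delta=8e^{-m\ve^2}$, i.e.\ $\ve=\sqrt{\tfrac1m\ln\tfrac8\delta}$, the displayed decomposition then gives $\bigl|\Dhat_{i\mid F}-Q(Y_{|i}\mid F)\bigr|\le 4\ve=\ve_{\mathrm{KM}}(m,\delta)$, as claimed.

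The main obstacle is the concentration step of the second paragraph: obtaining the correct $\sqrt{1/m}$ rate rather than the loose $(1/m)^{1/4}$ one. Everything hinges on replacing the worst-case H\"{o}lder control of $\sqrt{\cdot}$ by the relative-entropy (Chernoff) bound together with the Hellinger lower bound $\RE{a}{p}\ge(\sqrt a-\sqrt p)^2$; once this inequality is secured, the remaining triangle-inequality bookkeeping and the union bound are routine.
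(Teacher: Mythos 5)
Your proposal is correct and takes essentially the same route as the paper's proof: the identity $Q(Y_{|i}\mid F)=\sqrt{p_1q_1}+\sqrt{p_0q_0}$, the triangle-inequality reduction to the four deviations $\bigl|\sqrt{\phat_k}-\sqrt{p_k}\bigr|$ and $\bigl|\sqrt{\qhat_k}-\sqrt{q_k}\bigr|$, and the $p$-free tail $\Pr\bigl(\bigl|\sqrt{\phat}-\sqrt{p}\bigr|\ge\ve\bigr)\le 2e^{-m\ve^2}$, which you derive from the Chernoff bound plus the Hellinger lower bound on the Kullback--Leibler divergence --- precisely the content of the paper's Lemma~\ref{lm:okamoto} (Okamoto's inequality, proved there via $D(p\pm\ve\|p)\ge(\sqrt{p\pm\ve}-\sqrt{p})^2$). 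The concluding union bound over the four events with $\delta=8e^{-m\ve^2}$, yielding $4\sqrt{\tfrac{1}{m}\ln\tfrac{8}{\delta}}$, matches the paper exactly.
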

}
\begin{proof}
In appendix~\ref{pr:kearns}.
\end{proof}

\section{Confidence Decision Tree Algorithm}
\label{s:apps}
A setting in which confidence intervals for splits are extremely useful is online or stream-based learning. In this setting, examples are received incrementally, and a confidence interval can be used to decide how much data should be collected at a certain leaf before a good split $F$ can be safely identified. A well-known example of this approach are the so-called Hoeffding trees~\cite{domingos2000mining}. In this section, we show how our confidence interval analysis can be used to extend and refine the current approaches to stream-based decision tree learning. For $t=1,2,\dots$ we assume the training example $(\bX_t,Y_t)$ is received at time $t$.
\begin{algorithm}                   
\caption{C-Tree}          
\label{alg:confTree}                           
\begin{algorithmic}[1]                     
\REQUIRE Threshold $\tau > 0$
    \STATE Build a $1$-node tree $T$
    \FOR{$t=1,2,\dots$}
	    \STATE Route example $(\bX_t,Y_t)$ through $T$ until a leaf $\ell_t$ is reached
	    \IF{$\ell_t$ is not pure}
	    	\STATE Let ${\dt \Fhat = \argmax_{F\in\scF} \Dhat_{\ell_t,F_1} }$ and ${\dt \Fhat_2 = \argmax_{F\in\scF \,:\, F\neq\Fhat} \Dhat_{\ell_t,F_1} }$
	    	\IF{$\Dhat_{\ell_t,\Fhat} \le \Dhat_{\ell_t,\Fhat_2} - 2\ve_t$ \OR $\ve_t \le \tau$}
	    		\STATE Let $F_{\ell_t} = \Fhat$ and expand $\ell_t$ using split $F_{\ell_t}$
	    	\ENDIF
	    \ENDIF
    \ENDFOR    
\end{algorithmic}
\end{algorithm}
C-Tree (Algorithm~1) describes the online decision tree learning approach. A stream of examples is fed to the algorithm, which initially uses a $1$-node decision tree. At time $t$, example $(\bX_t,Y_t)$ is routed to a leaf $\ell_t$. If the leaf is not pure (both positive and negative examples have been routed to $\ell_t$), then the empirically best $\Fhat$ and the second-best $\Fhat_2$ split for $\ell_t$ are computed. If the difference in gain between these two splits exceeds a value $\ve_t$, computed via the confidence interval analysis, then the leaf is split using $\Fhat$. The leaf is also split when $\ve_t$ goes below a ``tie-break'' parameter $\tau$, indicating that the difference between the gains of $\Fhat$ and $\Fhat_2$ is so tiny that waiting for more examples in order to find out the really best split is not worthwhile.
Let $\ve(m,\delta)$ be the size of the confidence interval computed via Theorem~\ref{th:entropy}, \ref{th:gini} or~\ref{th:squareroot}. Fix any node $i$ and let $m_{i,t}$ be the number of examples routed to that node in the first $t-1$ time steps. Clearly, for any $F,F'\in\scF$ with $F \neq F'$, if $\Dhat_{i\mid F} \le \Dhat_{i\mid F'} - 2\ve(m_{i,t},\delta)$ then $G_{i,F} \ge G_{i,F'}$ with probability at least $1-\delta$. Now, since the number of possible binary splits is at most $dm_{i,t}$, if we replace $\delta$ by $\delta/(dm_{i,t})$ the union bound guarantees that a node $i$ is split using the function maximizing the gain.
\begin{lemma}
\label{l:aux}
Assume a leaf $i$ is expanded at time $t$ only if there exists a split $\Fhat$ such that
\[
    \Dhat_{i\mid\Fhat} \le \Dhat_{i\mid F} - 2\ve\bigl(m_{i,t},\delta/(dm_{i,t})\bigl)
\]
for all $F\in\scF$ such that $F \neq \Fhat$. Then,
${\dt
    \Fhat = \argmax_{F\in\scF} G_{i,F}
}$
with probability at least $1-\delta$.
\end{lemma}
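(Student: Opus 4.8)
The plan is to lift the single‑split guarantees of Theorems~\ref{th:entropy}, \ref{th:gini}, and~\ref{th:squareroot} to a statement that holds simultaneously over all candidate splits at node $i$, via a union bound, and then to turn the gap enforced by the expansion rule on the \emph{estimates} $\Dhat$ into a gap on the \emph{true} gains $G_{i,F}$. The first observation I would record is that, since $G_{i,F}=\Phi(Y_{|i})-\Phi(Y_{|i}\mid F)$ and the term $\Phi(Y_{|i})$ does not depend on $F$, maximizing $G_{i,F}$ over $F\in\scF$ is equivalent to minimizing $\Phi(Y_{|i}\mid F)$, and the estimate $\Dhat_{i\mid F}$ targets exactly this quantity. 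The relevant per‑split fact is that, for any \emph{fixed} $F$, the applicable theorem yields $\bigl|\Dhat_{i\mid F}-\Phi(Y_{|i}\mid F)\bigr|\le\ve(m_{i,t},\delta')$ with probability at least $1-\delta'$ over the $m_{i,t}$ i.i.d.\ examples routed to $i$.

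Next I would control the size of the effective split class. The statistic $\Dhat_{i\mid F}$ depends on $F$ only through the empirical counts $(\phat_0,\qhat_0,\phat_1,\qhat_1)$, hence only through the partition that $F$ induces on the $m_{i,t}$ observed points. For axis‑parallel threshold splits in $\R^d$, each of the $d$ coordinates gives at most $m_{i,t}$ distinct thresholds, so there are at most $d\,m_{i,t}$ realizable splits; this is exactly the count invoked in the discussion preceding the lemma. Applying the union bound with $\delta'=\delta/(d\,m_{i,t})$ to each of these at most $d\,m_{i,t}$ splits, I obtain the simultaneous event that $\bigl|\Dhat_{i\mid F}-\Phi(Y_{|i}\mid F)\bigr|\le\ve\bigl(m_{i,t},\delta/(d\,m_{i,t})\bigr)$ holds for \emph{all} relevant $F$ with probability at least $1-\delta$.

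Finally, working on this event and writing $\ve=\ve\bigl(m_{i,t},\delta/(d\,m_{i,t})\bigr)$, I would convert the expansion condition into a comparison of true gains by the triangle inequality: for every $F\neq\Fhat$,
\[
\Phi(Y_{|i}\mid\Fhat)\le\Dhat_{i\mid\Fhat}+\ve\le(\Dhat_{i\mid F}-2\ve)+\ve=\Dhat_{i\mid F}-\ve\le\Phi(Y_{|i}\mid F),
\]
where the middle step is the assumed expansion rule and the outer steps are the confidence bounds. Subtracting each side from the common term $\Phi(Y_{|i})$ reverses the inequality and gives $G_{i,\Fhat}\ge G_{i,F}$ for all $F$, i.e.\ $\Fhat=\argmax_{F\in\scF}G_{i,F}$, as claimed.

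The main obstacle I anticipate is the counting/union‑bound step, not the closing algebra. One must argue cleanly that it suffices to union bound over the data‑dependent but finite set of \emph{realizable} partitions, so that the factor $d\,m_{i,t}$ is legitimate, and that the $m_{i,t}$ examples reaching $i$ may indeed be treated as an i.i.d.\ sample from the conditional law $Y_{|i}$ so that the theorems apply verbatim. Once the effective class is correctly identified and its cardinality bounded by $d\,m_{i,t}$, the remainder is the direct triangle‑inequality manipulation above.
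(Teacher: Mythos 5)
Your proposal is correct and follows essentially the same route as the paper, whose own justification is exactly the two-step argument you give: the per-split confidence bound from Theorem~\ref{th:entropy}, \ref{th:gini} or~\ref{th:squareroot} turned into a gain comparison (noting $\Phi(Y_{|i})$ is invariant over $F$, so the $2\ve$ gap on the estimates $\Dhat$ forces $G_{i,\Fhat}\ge G_{i,F}$), followed by a union bound over the at most $d\,m_{i,t}$ realizable binary splits with $\delta$ replaced by $\delta/(d\,m_{i,t})$. Your explicit triangle-inequality chain and your remark on why the effective class has cardinality at most $d\,m_{i,t}$ merely fill in details the paper states without elaboration, so there is nothing substantively different to compare.
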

The next result provides a bound on the probability that a random example is classified using a suboptimal split. A similar result was proven in~\cite{domingos2000mining} for Hoeffding trees.
\bookbox{
\begin{theorem}
\label{th:optim}
Assume C-Tree (Algorithm~1) is run with
\begin{align}
\label{eq:gen-conf}
    \ve_t = \ve\left(m_{\ell_t,t},\frac{\delta}{(h_t+1)(h_t+2)tdm_{\ell_t,t}}\right)
\end{align}
where $\ve(m,\delta)$ is the size of the confidence interval computed via Theorem~\ref{th:entropy}, \ref{th:gini} or~\ref{th:squareroot} and $h_t$ is depth of $\ell_t$. Then the probability that a random example $\bX$ is routed via a $\tau$-suboptimal split is at most $\delta$.
\end{theorem}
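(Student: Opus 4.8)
The plan is to show that the only way the test point $\bX$ can be routed through a split whose gain is more than the $\tau$-tolerance below optimal is for one of the confidence bounds of Theorem~\ref{th:entropy}, \ref{th:gini} or~\ref{th:squareroot} to have failed at the moment the corresponding node was split; everything else is a union bound over the path of $\bX$, engineered so that the three factors $(h_t+1)(h_t+2)$, $t$ and $dm_{\ell_t,t}$ in~(\ref{eq:gen-conf}) respectively absorb a telescoping sum over depths, a sum over split times, and the union over candidate splits.

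First I would reduce ``$\tau$-suboptimal routing'' to ``confidence failure''. Fix an internal node $i$ created by splitting leaf $\ell_t$ at time $t$, and suppose the per-split intervals hold there, i.e. $\bigl|\Dhat_{i\mid F}-\Phi(Y_{|i}\mid F)\bigr|\le\ve_t$ for every candidate $F$. In the confident-separation branch of C-Tree the chosen $\Fhat$ satisfies $\Dhat_{i\mid\Fhat}\le\Dhat_{i\mid F}-2\ve_t$, so by the two-sided interval argument behind Lemma~\ref{l:aux} the split $\Fhat=\argmax_F G_{i,F}$ is exactly gain-optimal; in the tie-break branch $\ve_t\le\tau$ forces $G_{i,\Fhat}\ge\max_F G_{i,F}-2\ve_t\ge\max_F G_{i,F}-2\tau$, so $\Fhat$ stays inside the tolerance dictated by $\tau$. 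Hence, conditioned on the intervals holding at $i$, the split there is never $\tau$-suboptimal, and the event ``$\bX$ is routed via a $\tau$-suboptimal split'' is contained in the union, over internal nodes $i$ traversed by $\bX$, of the events $E_i$ that the confidence interval failed at the time $i$ was split.

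Next I would bound $\Pr(E_i)$ for a node split at time $t$, depth $h_t$, with $m_{\ell_t,t}$ examples. Applying the relevant bound among Theorems~\ref{th:entropy}--\ref{th:squareroot} to a single split at confidence level $\delta/\bigl((h_t+1)(h_t+2)tdm_{\ell_t,t}\bigr)$ and taking a union bound over the at most $dm_{\ell_t,t}$ candidate binary splits (exactly as in the derivation of Lemma~\ref{l:aux}) gives $\Pr(E_i)\le\delta/\bigl((h_t+1)(h_t+2)t\bigr)$. Then I would assemble the path union bound: conditioning on the training stream makes the tree and all split decisions deterministic, so $\Pr(\bX\text{ bad}\mid\text{train})\le\sum_{i\,:\,E_i}\Pr(\bX\to i)$; taking expectations and using that $\bX$ is independent of the stream (more precisely, conditioning on the ancestor structure of $i$ decouples the routing probability $\Pr(\bX\to i)$ from the sample-dependent event $E_i$) yields
\begin{align*}
    \Pr(\bX\text{ bad})
    \le \sum_{h\ge0}\frac{\delta}{(h+1)(h+2)}\sum_{t\,:\,h_t=h}\frac{\Pr(\bX\to\ell_t)}{t}~.
\end{align*}
Since for each fixed depth $h$ the split nodes $\{\ell_t:h_t=h\}$ are distinct and their cells partition a subset of the instance space, $\sum_{t\,:\,h_t=h}\Pr(\bX\to\ell_t)\le1$, and bounding $1/t\le1$ leaves the telescoping series $\sum_{h\ge0}\tfrac{1}{(h+1)(h+2)}=1$, so the whole expression is at most $\delta$.

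The main obstacle is this final assembly step rather than the single-node estimate: the union is over an a priori unbounded (indeed potentially infinite) collection of internal nodes, so a naive sum of $\Pr(E_i)$ over all split nodes diverges, since there can be up to $2^h$ nodes at depth $h$. Convergence is recovered only by weighting each node by its routing probability $\Pr(\bX\to i)$ and exploiting the per-depth partition identity together with the telescoping depth weights. The delicate point I would have to argue carefully is precisely that $\Pr(\bX\to i)$ and the confidence-failure event $E_i$ can be treated as conditionally independent — the routing to $i$ depends on the ancestor splits while $E_i$ depends on the sample that reaches $i$ — which is why I condition on the ancestor structure before pulling the routing probabilities out of the expectation.
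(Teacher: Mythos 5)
Your proposal follows essentially the same architecture as the paper's proof in appendix~\ref{pr:sub_opt}: reduce ``routed via a $\tau$-suboptimal split'' to a confidence failure at the node at the moment it was split (via Lemma~\ref{l:aux}), decompose over internal nodes grouped by depth, weight each node by its routing probability using $\sum_{i\in\scD_h}\Pr(\bX\to i)\le 1$, and close with the telescoping series $\sum_{h\ge 0}\tfrac{1}{(h+1)(h+2)}=1$. However, there is a genuine gap in your single-node estimate. You bound $\Pr(E_i)\le\delta/\bigl((h_t+1)(h_t+2)t\bigr)$ by \emph{one} application of Theorem~\ref{th:entropy}, \ref{th:gini} or~\ref{th:squareroot} at sample size $m_{\ell_t,t}$, plus a union over the $dm_{\ell_t,t}$ candidate splits. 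But those theorems hold for a \emph{fixed} number $m$ of i.i.d.\ examples, while the moment at which C-Tree splits a node is a data-dependent stopping time: the sample size at the split is determined by the very empirical quantities $\Dhat_{i\mid F}$ whose deviations you are trying to control, so conditioning on ``the split fired at sample size $m_{\ell_t,t}$'' invalidates the direct application. The paper repairs exactly this with the sum $\sum_{s=0}^{t-1}$ in appendix~\ref{pr:sub_opt}: a union bound over every sample size $s\le t-1$ at which the split \emph{could} have been triggered, each term costing $\delta/\bigl((h+1)(h+2)t\bigr)$, so that the honest per-node failure probability is $\delta/\bigl((h+1)(h+2)\bigr)$. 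This union over $s$ is the entire reason the factor $t$ appears in~(\ref{eq:gen-conf}).

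Your own accounting betrays the omission: you carry the factor $1/t$ into the final sum and then throw it away via $1/t\le 1$, so in your argument the $t$ in~(\ref{eq:gen-conf}) buys nothing --- if your bound on $\Pr(E_i)$ were legitimate, the theorem would hold with $t$ deleted from the confidence parameter, and it does not, precisely because of the stopping-time issue. Your opening gloss that $t$ ``absorbs a sum over split times'' reflects the same misreading: there is no sum over times per node; the sum is over candidate sample sizes at a single node. Once you replace the single application by the union over $s\in\{0,\dots,t-1\}$, the rest of your assembly goes through exactly as in the paper; indeed your explicit decoupling of the routing probability $\Pr(\bX\to i)$ from the training-sample event $E_i$ by conditioning on the ancestor structure is a point the paper glosses over, and your observation that the tie-break branch only yields a $2\tau$ (rather than $\tau$) guarantee is a slack the paper's proof shares, so neither counts against you.
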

}
\begin{proof}
In appendix~\ref{pr:sub_opt}.  
\end{proof}
\begin{remark}
\label{rm:log_mis}
Theorem~\ref{th:optim} controls the classification of a single random example. 
However, choosing $\delta = \tfrac{1}{t}$, and applying the union bound over the time steps, guarantees that only a logarithmic number of examples in the stream are classified via suboptimal splits.
\end{remark}

\section{Selective Strategy for Decision Tree Learning}
\label{s:selective}
In a truly online learning setting, such as a surveillance system or a medical monitoring application, the classification system is asked to predict the label of each incoming data item. However, training labels can be only obtained through the help of a costly human annotator, who should be inkoved only when the confidence in the classification of the current instance falls below a certain level. Selective sampling allows to model this interaction between the learning system and the labeler agent. In our online decision tree setting, we apply the selective sampling mechanism at the leaf level: when enough examples are routed to leaf $i$ such that the event $f_T(\bX) = y^*_i$ holds with the desired confidence level, the algorithm moderates asking the labels of additional examples that are routed to the leaf. 
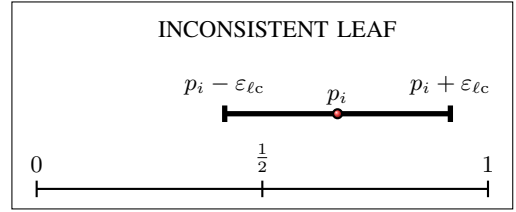
\begin{figure}[ht!]
	\centering
	\begin{tikzpicture}[thick, framed]
	\draw (1,0.5) -- (7,0.5);
	\draw (1,0.39) -- (1,0.61) node[above]{$0$};
	\draw (7,0.39) -- (7,0.61) node[above]{$1$};
	\draw (4,0.39) -- (4,0.61) node[above]{$\frac{1}{2}$};
	
	\draw (3.5,1.5)[line width=2pt] -- (6.5,1.5);
	\draw (3.5,1.39)[line width=2pt] -- (3.5,1.61) node[above]{$p_i -\ve_{\ell\mathrm{c}}$};
	\draw (6.5,1.39)[line width=2pt] -- (6.5,1.61) node[above]{$p_i +\ve_{\ell\mathrm{c}}$};
	\filldraw[ball color=red!80,shading=ball] (5,1.5) circle
	(0.06cm) node[above]{$p_i$};
	\draw (4,2.4) node[above,xshift=0.2cm]{INCONSISTENT LEAF};%
	\end{tikzpicture}
	\caption{\label{fig:conf_leaf} An example of not $\delta$-consistent leaf. The class probability confidence interval overlaps $0.5$. In this case we are not sure, at the desired confidence level, if the prediction made by the leaf is the same of that of the Bayesian Optimal classifier in the correspondent sub-region.}
\end{figure}
\bookbox{
\begin{definition}
Recall that $\ell_t$ is the leaf to which example $(\bX_t,Y_t)$ is routed, and that $m_{\ell_t,t}$ is the number of queried data points routed to leaf $\ell_t$ in the first $t-1$ time steps. Let $\Yhat_{\ell_t,t}$ be the fraction of positive examples among those points, so that $f_T(\bX_t) = \Ind{\Yhat_{\ell_t,t} \ge \tfrac{1}{2}}$. We say that $\ell_t$ is $\delta$-consistent if
\begin{align}
\label{eq:d-cons}
    \bigl|\Yhat_{\ell_t,t} - \tfrac{1}{2}\bigr| > \ve_{\ell\mathrm{c}}(m_{\ell_t,t},t,\delta)
    \\ \text{where} \quad \ve_{\ell\mathrm{c}}(m,t,\delta)=    \sqrt{\frac{1}{2m}\ln\frac{2t}{\delta}}.
\end{align}
\end{definition}
}
Let $p_i = \Pr(Y_{|i} = 1) = \Pr(Y = 1 \mid \bX \to i)$. If $\ell_t$ is $\delta$-consistent but $f_T(\bX_t) \neq y_{\ell_t}^*$, then it must be that $\bigl| \Yhat_{\ell_t,t} - p_{\ell_t} \bigr| \ge \sqrt{\tfrac{1}{2m_{\ell_t,t}}\ln\tfrac{2t}{\delta}}$. Hence, when a leaf becomes $\delta$-consistent we are confident that its classification is optimal at a certain confidence level ---see Figure~\ref{fig:conf_leaf}. On the contrary, when the leaf is not $\delta$-consistent we need to let the leaf seeing more data to reach the consistency. Even in the case of consistency, the leaves need further data in order to discover possible new good splits. In this direction our active method asks a small portion of the input data in order to let exploration even if the leaf is consistent. In this case, as we will describe later with more details, we set the query rate depending on some statistics observed at the leaf level. In particular, we require more exploration for weakly consistent leaf (i.e., a leaf supported by a small portion of data and not pure class distribution) respect to the very confident ones (i.e., a leaf supported by a huge amount of data and with pure class distribution). On the other hand, when the leaf is not consistent, all the labels have to be requested in order to reach the consistency. 

As labels are generally obtained via queries to human annotators, any practical active learning system for streaming settings should impose a bound on the query rate. The active framework we propose is taken from \cite{zliobaite2014active} ---see Algorithm~\ref{alg:online-budget} (ACTIVE setting). Whenever a new sample is presented to the model, the system makes a prediction and then invokes the active learning module in order to determine whether the label should be requested. If this is the case, then a query is issued to the annotator unless the query rate budget is violated. When the label is not requested, the model is not updated. Our labeling strategy is described in Algorithm~\ref{alg:confTree}. In summary, if the incoming sample falls into a not $\delta$-consistent leaf --see Figure~\ref{fig:conf_leaf}-- the annotation is requested. Contrary, on $\delta$-consistent leaf, the annotations are moderated with random sampling to guarantee exploration, that is a controlled growth of the tree. In this case the sampling probability, which gives the priority on the labeling requests,  depends on the budget rate $B$ (more budget more probability), the leaf confidence error $\ve_{\ell\mathrm{c}}$ (less samples support the probability estimates more priority) and the class distribution purity $\bigl|\Yhat_{\ell_t,t} - \tfrac{1}{2}\bigr|$ (distribution toward uniformity more priority). 
%
\bookbox{
\begin{theorem}
The probability that the classification of a $\delta$-consistent leaf is non-optimal is at most $\delta$:
$$ \Pr\bigl( f_T(\bX_t) \neq y_{\ell_t}^*,\; \text{$\ell_t$ is $\delta$-consistent}\bigr)\le\delta$$.
\end{theorem}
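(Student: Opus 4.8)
The plan is to reduce the joint event in the statement to a single large-deviation event for the leaf estimate, and then bound that event by Hoeffding's inequality together with a union bound over the leaves. First I would make precise the deterministic reduction sketched in the text preceding the definition. On the event that $\ell_t$ is $\delta$-consistent we have $\bigl|\Yhat_{\ell_t,t}-\tfrac{1}{2}\bigr|>\ve_{\ell\mathrm{c}}(m_{\ell_t,t},t,\delta)$, while on the event $f_T(\bX_t)\neq y_{\ell_t}^{*}$ the estimate $\Yhat_{\ell_t,t}$ and the true mean $p_{\ell_t}$ sit on opposite sides of $\tfrac{1}{2}$. A short case analysis, splitting on whether $\Yhat_{\ell_t,t}\ge\tfrac{1}{2}$, then yields $\bigl|\Yhat_{\ell_t,t}-p_{\ell_t}\bigr|>\ve_{\ell\mathrm{c}}(m_{\ell_t,t},t,\delta)$: in the first sub-case consistency gives $\Yhat_{\ell_t,t}>\tfrac{1}{2}+\ve_{\ell\mathrm{c}}$ and misclassification gives $p_{\ell_t}<\tfrac{1}{2}$, and symmetrically in the second. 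Thus the event whose probability we must bound is contained in $\bigl\{\bigl|\Yhat_{\ell_t,t}-p_{\ell_t}\bigr|>\ve_{\ell\mathrm{c}}(m_{\ell_t,t},t,\delta)\bigr\}$.

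Next I would bound this deviation event one leaf at a time. Fix a leaf $i$ and condition on the value of its count $m_{i,t}=m$; treating the queried labels routed to $i$ as i.i.d.\ Bernoulli draws with mean $p_i$, the estimate $\Yhat_{i,t}$ is the average of $m$ such bits, so the two-sided Hoeffding inequality~\cite{hoeffding1963probability} gives
\[
    \Pr\Bigl(\bigl|\Yhat_{i,t}-p_i\bigr|>\ve_{\ell\mathrm{c}}(m,t,\delta)\,\Big|\,m_{i,t}=m\Bigr)\le 2\exp\!\bigl(-2m\,\ve_{\ell\mathrm{c}}(m,t,\delta)^{2}\bigr)=\frac{\delta}{t},
\]
where the final equality is exactly what the choice $\ve_{\ell\mathrm{c}}(m,t,\delta)=\sqrt{\tfrac{1}{2m}\ln\tfrac{2t}{\delta}}$ is designed to produce. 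Since this bound is independent of $m$, averaging over the law of $m_{i,t}$ removes the conditioning and leaves $\Pr\bigl(\bigl|\Yhat_{i,t}-p_i\bigr|>\ve_{\ell\mathrm{c}}(m_{i,t},t,\delta)\bigr)\le\delta/t$ for each fixed leaf $i$.

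Finally I would sum over leaves. Because the tree starts as a single node and each of the $t-1$ examples seen before step $t$ can trigger at most one split, there are at most $t$ leaves at step $t$; decomposing the deviation event for $\ell_t$ over the (disjoint) possibilities for which leaf $\bX_t$ reaches and applying the per-leaf bound, the total probability is at most $t\cdot(\delta/t)=\delta$, which together with the reduction of the first paragraph gives the claim. I expect the delicate point to be the conditioning step: the count $m_{i,t}$, the very fact that $i$ is still a leaf, and the labels forming $\Yhat_{i,t}$ are all data-dependent, since past labels drive the split decisions, so the clean conditional Hoeffding bound is rigorous only under the standard idealization that the labels reaching a fixed leaf are i.i.d.\ from that leaf's conditional law (equivalently, that conditioning on $m_{i,t}$ preserves the $\mathrm{Bernoulli}(p_i)$ structure). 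Justifying this — or recasting it through a filtration/optional-stopping argument — is the real work; granting it, a union bound over the at most $t$ leaves closes the proof, the factor $2t$ inside $\ln\tfrac{2t}{\delta}$ being exactly what accounts for the two-sided tail (the $2$) and for that union (the $t$).
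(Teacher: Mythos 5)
Your reduction of the joint event to the deviation event $\bigl|\Yhat_{\ell_t,t}-p_{\ell_t}\bigr| \ge \ve_{\ell\mathrm{c}}(m_{\ell_t,t},t,\delta)$ matches the paper's first step exactly. But the way you then discharge that deviation event diverges from the paper at precisely the point you flag as delicate, and the gap you acknowledge is real: conditioning on $m_{i,t}=m$ (and on $i$ being a leaf at time $t$) does not obviously preserve the i.i.d.\ Bernoulli$(p_i)$ structure, since the count and the tree topology are functions of the very labels that enter $\Yhat_{i,t}$. The paper never performs this conditioning. Instead it spends the factor $t$ inside $\ln\frac{2t}{\delta}$ on a union bound over the $t$ possible values $s\in\{0,\dots,t-1\}$ of the random count:
\[
\Pr\left( \bigl|\Yhat_{\ell_t,t}-p_{\ell_t}\bigr| \ge \sqrt{\tfrac{1}{2m_{\ell_t,t}}\ln\tfrac{2t}{\delta}} \right) \le \sum_{s=0}^{t-1} \Pr\left( \bigl|\Yhat_{\ell_t,t}-p_{\ell_t}\bigr| \ge \sqrt{\tfrac{1}{2s}\ln\tfrac{2t}{\delta}} \right),
\]
so the Chernoff--Hoeffding bound is only ever invoked with a deterministic sample size, each term costing at most $\delta/t$. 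This union-over-counts device is exactly the missing justification you deferred to ``a filtration/optional-stopping argument''; it is the standard Hoeffding-tree trick, and it is where the real work of the proof lives --- your proposal names the difficulty but does not close it.

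Second, your accounting of the $t$ is off: you attribute it to a union over the at most $t$ leaves, but the decomposition over leaves is free, because the routing events $\{\bX_t\to i\}$ are disjoint and the paper simply writes $\sum_{i\in\scL(T)}\Pr(\cdot\mid\bX_t\to i)\Pr(\bX_t\to i)$ with $\sum_{i}\Pr(\bX_t\to i)=1$; no cardinality bound on $\scL(T)$ ever enters. A symptom that the bookkeeping is inconsistent: if your conditional-Hoeffding idealization were granted, each leaf would already satisfy the $\delta/t$ bound, and averaging over the disjoint routing events would yield the \emph{stronger} conclusion $\delta/t$ overall, making the $t$ inside the logarithm superfluous --- which shows that factor cannot be there to pay for the leaves. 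In short: your skeleton (reduction, Hoeffding, leaf decomposition) is right, but the step you left as an idealization is the crux of the theorem, and the paper's union over sample counts both closes it and is the true explanation of the $\ln\frac{2t}{\delta}$ width in the definition of $\delta$-consistency.
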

}
\begin{proof}
In appendix~\ref{pr:cons_sub_opt}.  
\end{proof}
Similarly to Theorem~\ref{th:optim}, choosing $\delta = \tfrac{1}{t}$ and applying the union bound allows to conclude that at most a logarithmic number of examples in the stream are misclassified by $\delta$-consistent leaves. We use this setting in all the experiments. 
%
\begin{algorithm}[h]                   
\caption{Confidence Tree Strategy}          
\label{alg:confTree}  
\begin{algorithmic}[1]                     
\REQUIRE incoming sample $\bX_t$, decision tree $T$ , budget $B$
\ENSURE \texttt{labeling} $\in\{true, false\}$
    \STATE Route instance $\bX_t$ through $T$ until a leaf $\ell_t$ is reached
   \IF{ ($\ell_t$ is not $\delta$-consistent) \OR 
   ($\ell_t$ is $\delta$-consistent \AND $\texttt{rand}\le \frac{B+\ve_{\ell\mathrm{c}}}{B+\ve_{\ell\mathrm{c}}+\bigl|\Yhat_{\ell_t,t} - \tfrac{1}{2}\bigr|}$ ) }
  		\STATE \textbf{return} $true$  	
  	\ELSE  		
  		\STATE \textbf{return} $false$
  	\ENDIF
\end{algorithmic}
\end{algorithm}
In the work~\cite{zliobaite2014active} are presented different labeling strategies respect to Algorithm~\ref{alg:confTree}. These techniques confide only in the leaf class distribution $Y_{|i}$ and do not take into account the confidence information. The resulting performances are less robust respect that ones achieved by our approaches. This is empirically verified in the experiments showed in Section~\ref{s:SS_exp}.

\section{Full Sampling Experiments}
\label{s:exp}
We ran experiments on synthetic datasets and popular benchmarks, comparing our C-Tree (Algorithm~1) against two baselines: H-Tree (VDFT algorithm~\cite{domingos2000mining}) and CorrH-Tree (the method from~\cite{duda2014novel} using the classification error as splitting criterion). The bounds of~\cite{rutkowski2012decision} are not considered because of their conservativeness. In fact, these bounds generate $1$-node trees in all the experiments, even when the confidence is set to a very low value.

The three methods (ours and the two baselines) share the same core, i.e., the HoeffdingTree (H-Tree) algorithm implemented in MOA\footnote{\url{moa.cms.waikato.ac.nz/}}. In order to implement C-tree and the baseline CorrH-Tree, we directly modified the H-Tree code in MOA.
The grace period parameter\footnote{This is the parameter dictating how many new examples since the last evaluation should be routed to a leaf before revisiting the decision ---see \cite{domingos2000mining}.} was set to $100$. In contrast to the typical experimental settings in the literature, we did not consider the tie-break parameter because in the experiments we observed that it caused the majority of the splits.
Based on Theorem~\ref{th:optim} and Remark~\ref{rm:log_mis}, we used the following version of our confidence bounds $\ve_{\mathrm{KM}}$ and $\ve_{\mathrm{Gini}}$ (the bound for $\ve_{\mathrm{ent}}$ contains an extra $\ln m$ factor),
\begin{align}
\label{eq:heuristic}
\widetilde{\ve}_{\mathrm{KM}}=\widetilde{\ve}_{\mathrm{Gini}}= c \sqrt{ \frac{1}{m}\ln\bigl(m^2 h^2 t d\bigr)}
\end{align}
where the parameter $c$ is used to control the number of splits.

In a preliminary round of experiments, we found that the Gini index delivered a performance comparable to that of entropy and Kearns-Mansour, but ---on average--- produced trees that were more compact for all three algorithms (ours and the two baselines). Hence, we ran all remaining experiments using the Gini index.

In all experiments we measured the \textsl{online performance}. This is the average performance (either accuracy or F-measure) when each new example in the stream is predicted using the tree trained only over the past examples in the stream (``Interleaved Test-Then-Train'' validation in MOA) ---see Algorithm~\ref{alg:online-budget} (FULL setting).

\begin{algorithm}[h]                   
	\caption{Online Stream Validation Protocol}          
	\label{alg:online-budget}  
	\begin{algorithmic}[1]                     
		\REQUIRE labeling budget $B$, Active Strategy, examples stream $(\bx_1,y_1),(\bx_2,y_2),\ldots$
		\STATE Initialize online accuracy $M_0 = 0$
		\FOR{$i=1,2,\ldots$}
		\STATE Receive sample $\bx_i$
		\STATE Predict $\hat{y}_i$
		\STATE Update $M_i = \bigl(1-\tfrac{1}{i}\bigr)M_{i-1} + \tfrac{1}{i}\mathbb{I}\{\hat{y}_i = y_i\}$
		\IF {FULL setting}
		\STATE Receive true label $y_i$  	
		\STATE Update model using new example $(\bx_i,y_i)$
		\ELSE
		\STATE // ACTIVE setting
		\IF {budget $B$ not exceeded}
		\IF {\texttt{Strategy($\bx_i$,\texttt{model})}}
		\STATE Request true label $y_i$
		\STATE Update query rate
		\STATE Update model using new example $(\bx_i,y_i)$
		\ENDIF  	
		\ENDIF
		\ENDIF
		\ENDFOR
	\end{algorithmic}
\end{algorithm}	

\subsection{Controlled Experiments}
\label{s:contr_exp}
%
In order to empirically verify the features of our bounds we performed experiments in a controlled setting. These experiments show how the detailed form of our confidence bound, which ---among other things--- takes into account the number $d$ of attributes and the structure of the tree (through the depth of the leaves to split), allows C-Tree to select splits that are generally better than the splits selected by the baselines. In particular, we generated data streams from a random decision trees with $50$ leaves and observed that C-Tree dominates across the entire range of parameters and ---unlike the other algorithms--- achieves the best accuracy when the number of its leaves is the same as the one of the tree generating the stream.
The random binary trees were generated according to Algorithm~2 with fixed class distributions in each leaf. The random binary trees are constructed through recursive random splits. More precisely, we start at the root with a budget of $n$ leaves. Then we assign to the left and right sub-trees $\floor*{nX}$ and $n-1-\floor*{nX}$ leaves respectively, where $X$ is uniformly distributed in the unit interval. This splitting continues with i.i.d.\ draws of $X$ until the left and right sub-trees are left with one leaf each.
Whenever a split is generated, we assign it a uniformly random attribute and a random threshold value. In the experiment, we generated $1000$ random binary trees with $n=50$ leaves. The random splits are performed choosing among $d=5$ attributes. For simplicity, we only considered numerical attributes and thresholds in the $[0,1]$ interval. A random binary tree is then used to generate a stream as follows: for each leaf of the tree, $10,\!000$ examples are uniformly drawn from the subregion of $[0,1]^{5}$ defined by the leaf, obtaining $500,\!000$ examples. Each of these examples is given label $1$ with probability $0.7$ for a left leaf and with probability $0.3$ for a right leaf.
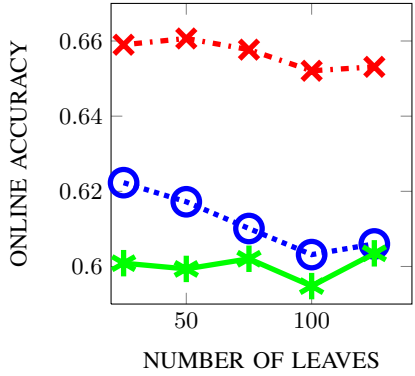
\begin{figure}[t]
\centering
%
%
\begin{tikzpicture}

\begin{axis}[%
width=\figurewidthp,
height=\figureheightp,
scale only axis,
xmin=20,
xmax=140,
xlabel={NUMBER OF LEAVES},
ymin=0.59,
ymax=0.67,
ylabel={ONLINE ACCURACY}
]
\addplot [color=red,dash pattern=on 1pt off 3pt on 3pt off 3pt,line width=2.0pt,mark size=5.0pt,mark=x,mark options={solid},forget plot]
  table[row sep=crcr]{25	0.659004081632654\\
50	0.660671541501977\\
75	0.657700215982721\\
100	0.652077197802198\\
125	0.65312209631728\\
};
\addplot [color=blue,dotted,line width=2.0pt,mark size=5.0pt,mark=o,mark options={solid},forget plot]
  table[row sep=crcr]{25	0.622298306679208\\
50	0.617215674603174\\
75	0.610100867052023\\
100	0.603156485355648\\
125	0.605997260273973\\
};
\addplot [color=green,solid,line width=2.0pt,mark size=5.0pt,mark=asterisk,mark options={solid},forget plot]
  table[row sep=crcr]{25	0.600879834254144\\
50	0.599350410958904\\
75	0.60205038167939\\
100	0.594776262626263\\
125	0.603497333333333\\
};
\end{axis}
\end{tikzpicture}%
\caption{
\label{fig:cmp}
Online accuracy against number of leaves achieved across a grid of $200$ input parameter values on $1000$ synthetic datasets ($c\in(0,2)$ for C-Tree (cross and red line), $\delta \in (0,1)$ for H-Tree (circle and blue line) and CorrH-Tree (star and green line).
}
\end{figure}
In Figure~\ref{fig:cmp} we show online performances averaged over $1000$ streams, each generated using a different random binary tree. In order to span a wide range of tree sizes, we used a grid of $200$ different values for the algorithms' parameters controlling the growth of the trees. Namely, the parameter $\delta$ available in MOA implementation of H-Tree and CorrH-Tree, and the parameter $c$ of~(\ref{eq:heuristic}) for C-Tree (in C-Tree $\delta$ is set to $\tfrac{1}{t}$ according to Remark~\ref{rm:log_mis}). The plots are obtained as follows: for each dataset and algorithm we logged the running average of the online performance --$M_i$ in Algorithm~\ref{alg:online-budget}-- and the total number of leaves in the tree as the stream was being fed to the algorithm.
\begin{algorithm}[htbp]                   
	\caption{RandCBT}         
	\label{alg:random_tree_tree}                           
	\begin{algorithmic}[2]                   
		\REQUIRE tree $T$, total number of leaves \texttt{num-leaves}, number of attributes $d$, leaf class conditional probability $q$
		\ENSURE complete binary tree $T$     
		
		\STATE \texttt{current-node} $i=$ CreateNode()     	
		\IF{\texttt{num-leaves} $== 1$}
		\STATE mark $i$ as leaf 
		\IF{$i$ is a left child}
		\STATE $\Pr(Y=1|\bX \to i)=q$ 
		\ELSE 
		\STATE $\Pr(Y=1|\bX \to i)=1-q$
		\ENDIF
		\ELSE            
		\STATE $x=$ UniformSample$[0,1]$
		\STATE \texttt{left-leaves} $= \max\bigl\{1,\floor*{\mbox{\texttt{num-leaves}}\cdot x}\bigr\}$
		\STATE \texttt{right-leaves} $= \mbox{\texttt{num-leaves}}-\mbox{\texttt{left-leaves}}$
		\STATE $i =$ RandomAttribute$(1,\ldots,d)$
		\STATE $v =$ RandomValueInSubRegion$(i)$
		\STATE add split test $(i,v)$ to $i$
		\STATE \texttt{l-child} $=$ RandCBT$(i,\mbox{\texttt{left-leaves}},d,q)$
		\STATE \texttt{r-child} $=$ RandCBT$(i,\mbox{\texttt{right-leaves}},d,q)$
		\STATE add \texttt{l-child} and \texttt{r-child} as a descendent of $i$        	
		\ENDIF
		\STATE \textbf{return} \texttt{current-node} $i$
	\end{algorithmic}
\end{algorithm}

\subsection{Experiments on real-world data}
\label{s:real_exp}
We constructed ten different streams from each dataset listed below here by taking a random permutation of the examples in it. 
\begin{table}[htp!]
\begin{center}
\begin{tabular}{|c|c|c|c|c|} \hline
Dataset & Dimension & Examples & $|+|$ & $|-|$ \\ \hline
A9A* & 123 & 48842 & 11687 & 37155 \\ \hline
AIRLINES & 7 & 539383 & 240264 & 299119  \\ \hline
COD-RNA* & 8 & 488565 & 162855 & 325710  \\ \hline
COVERTYPE & 54 & 581012 & 283301 & 297711  \\ \hline
ELECTRICITY & 8 & 45312 & 26075 & 19237  \\ \hline
\end{tabular}
\end{center}
\caption{Datasets used for benchmarking.}
\label{tab:data}
\end{table}
A9A, COD-RNA and COVERTYPE are from the LIBSVM binary classification repository\footnote{\url{www.csie.ntu.edu.tw/~cjlin/libsvmtools/datasets/binary.html}}. AIRLINES and ELECTRICITY are from the MOA collection\footnote{\url{moa.cms.waikato.ac.nz/datasets/}}. On the unbalanced datasets (marked with a star in Table~\ref{tab:data}) we used the F-measure on the smallest class to measure performance whereas accuracy was used for the remaining datasets.
The parameters $\delta$ (H-Tree and CorrH-Tree) and $c$ (C-Tree) were individually tuned on each dataset using a grid of $200$ values, hence plots show the online performance of each algorithm when it is close to be optimally tuned.
Even if the datasets are not particularly large, the plots show that trees generated by our algorithm compare favourably with respect to the baselines especially in the first learning phases.
\begin{figure*}[t]
\centering
       \subfigure[A9A]{
       \centering
%
%
\begin{tikzpicture}

\begin{axis}[%
width=\figurewidth,
height=\figureheight,
scale only axis,
xmin=10,
xmax=45,
ymin=0.35,
ymax=0.55
]
\addplot [color=red,dash pattern=on 1pt off 3pt on 3pt off 3pt,line width=2.0pt,mark size=5.0pt,mark=x,mark options={solid},forget plot]
  table[row sep=crcr]{13.3	0.458824583901059\\
23.1	0.510873806858413\\
33.6	0.533396390825486\\
44.1	0.547591780051138\\
};
\addplot [color=blue,dotted,line width=2.0pt,mark size=5.0pt,mark=o,mark options={solid},forget plot]
  table[row sep=crcr]{13.3	0.413886085828309\\
23.1	0.476924233118446\\
33.6	0.521309721653378\\
44.1	0.540170956324409\\
};
\addplot [color=green,solid,line width=2.0pt,mark size=5.0pt,mark=asterisk,mark options={solid},forget plot]
  table[row sep=crcr]{13.3	0.362024053251894\\
23.1	0.412505263801552\\
33.6	0.436660279560818\\
44.1	0.468995432486855\\
};
\end{axis}
\end{tikzpicture}%
        }
        \subfigure[AIRLINES]{
        \centering
%
%
\begin{tikzpicture}

\begin{axis}[%
width=\figurewidth,
height=\figureheight,
scale only axis,
xmin=4,
xmax=10,
ymin=0.621,
ymax=0.631
]
\addplot [color=red,dash pattern=on 1pt off 3pt on 3pt off 3pt,line width=2.0pt,mark size=5.0pt,mark=x,mark options={solid},forget plot]
  table[row sep=crcr]{4.25	0.626656318177265\\
5.95	0.627820459826342\\
7.65	0.628519378047933\\
9.35	0.629552618551688\\
};
\addplot [color=blue,dotted,line width=2.0pt,mark size=5.0pt,mark=o,mark options={solid},forget plot]
  table[row sep=crcr]{4.25	0.621029545504019\\
5.95	0.625387105410639\\
7.65	0.628578582333197\\
9.35	0.6285316788762\\
};
\addplot [color=green,solid,line width=2.0pt,mark size=5.0pt,mark=asterisk,mark options={solid},forget plot]
  table[row sep=crcr]{4.25	0.621869989453983\\
5.95	0.626639149480822\\
7.65	0.62823305576286\\
9.35	0.628890750259176\\
};
\end{axis}
\end{tikzpicture}%
        }
        \subfigure[COD-RNA]{
        \centering
%
%
\begin{tikzpicture}

\begin{axis}[%
width=\figurewidth,
height=\figureheight,
scale only axis,
xmin=20,
xmax=140,
ymin=0.81,
ymax=0.9
]
\addplot [color=red,dash pattern=on 1pt off 3pt on 3pt off 3pt,line width=2.0pt,mark size=5.0pt,mark=x,mark options={solid},forget plot]
  table[row sep=crcr]{38.7	0.85912337360155\\
60.3	0.873884630781767\\
84.6	0.886213146635366\\
135	0.896988006928489\\
};
\addplot [color=blue,dotted,line width=2.0pt,mark size=5.0pt,mark=o,mark options={solid},forget plot]
  table[row sep=crcr]{38.7	0.83775143431066\\
60.3	0.856251168159125\\
84.6	0.870940905683968\\
135	0.894029907222185\\
};
\addplot [color=green,solid,line width=2.0pt,mark size=5.0pt,mark=asterisk,mark options={solid},forget plot]
  table[row sep=crcr]{38.7	0.815160481854328\\
60.3	0.827205762365932\\
84.6	0.826971012872544\\
135	0.829465659326935\\
};
\end{axis}
\end{tikzpicture}%
        }
        \subfigure[COVERTYPE]{
        \centering
%
%
\begin{tikzpicture}

\begin{axis}[%
width=\figurewidth,
height=\figureheight,
scale only axis,
xmin=10,
xmax=90,
ymin=0.7,
ymax=0.76
]
\addplot [color=red,dash pattern=on 1pt off 3pt on 3pt off 3pt,line width=2.0pt,mark size=5.0pt,mark=x,mark options={solid},forget plot]
  table[row sep=crcr]{16.25	0.718718520980687\\
34.45	0.742418892137007\\
55.9	0.748358641440673\\
89.7	0.75904785465026\\
};
\addplot [color=blue,dotted,line width=2.0pt,mark size=5.0pt,mark=o,mark options={solid},forget plot]
  table[row sep=crcr]{16.25	0.708550092760503\\
34.45	0.721226063480858\\
55.9	0.730028937656615\\
89.7	0.749292847550089\\
};
\addplot [color=green,solid,line width=2.0pt,mark size=5.0pt,mark=asterisk,mark options={solid},forget plot]
  table[row sep=crcr]{16.25	0.736082547260424\\
34.45	0.740277748590664\\
55.9	0.744780126445741\\
89.7	0.752193614310229\\
};
\end{axis}
\end{tikzpicture}%
        }
        \subfigure[ELECTRICITY]{
        \centering
%
%
\begin{tikzpicture}

\begin{axis}[%
width=\figurewidth,
height=\figureheight,
scale only axis,
xmin=1,
xmax=6,
ymin=0.68,
ymax=0.75
]
\addplot [color=red,dash pattern=on 1pt off 3pt on 3pt off 3pt,line width=2.0pt,mark size=5.0pt,mark=x,mark options={solid},forget plot]
  table[row sep=crcr]{1	0.723611116607815\\
3	0.737664487815246\\
6	0.741487286257394\\
};
\addplot [color=blue,dotted,line width=2.0pt,mark size=5.0pt,mark=o,mark options={solid},forget plot]
  table[row sep=crcr]{1	0.682319971319971\\
3	0.720663266193623\\
6	0.732408953767738\\
};
\addplot [color=green,solid,line width=2.0pt,mark size=5.0pt,mark=asterisk,mark options={solid},forget plot]
  table[row sep=crcr]{1	0.734542468760408\\
3	0.725810698528939\\
6	0.734927470207188\\
};
\end{axis}
\end{tikzpicture}%
        }\hfill
\caption{
\label{fig:full}
Online performance (accuracy or F-measure) against number of leaves for each bin and dataset achieved by C-Tree (cross and red line), H-Tree (circle and blue line) and CorrH-Tree (star and green line).
} 
\end{figure*}
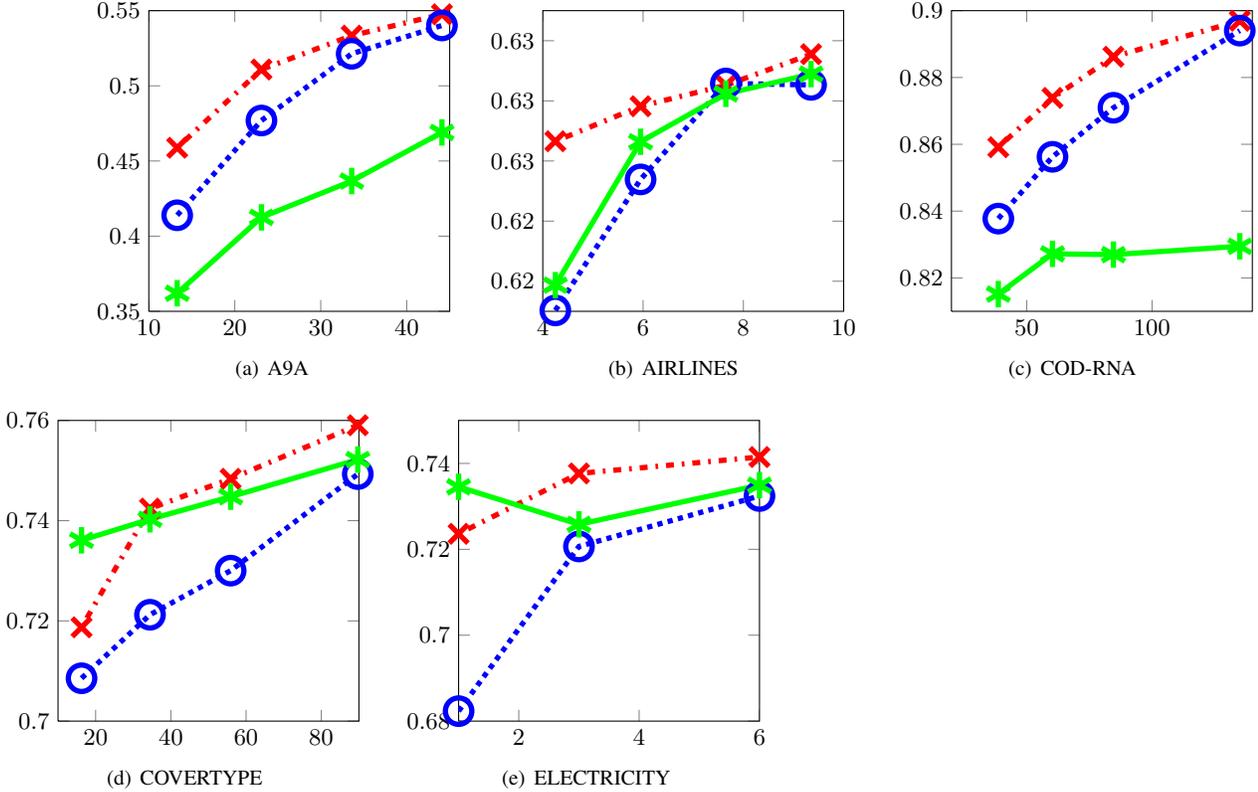
\begin{figure*}[t]
	\centering
	\subfigure[A9A]{
		\centering
%
%
%
\definecolor{mycolor1}{rgb}{0.00000,0.00000,0.17241}%
\definecolor{mycolor2}{rgb}{1.00000,0.10345,0.72414}%
\definecolor{mycolor3}{rgb}{0.00000,0.34483,0.00000}%
\definecolor{mycolor4}{rgb}{0.62069,0.31034,0.27586}%
\begin{tikzpicture}

\begin{axis}[%
width=\figurewidth,
height=\figureheight,
scale only axis,
xmin=0.1,
xmax=0.62,
ymin=0.2176,
ymax=0.42272,
axis x line*=bottom,
axis y line*=left
]
\addplot [color=blue,dashed,line width=1.0pt,mark size=1.8pt,mark=*,mark options={solid,fill=blue},forget plot]
  table[row sep=crcr]{0.1	0.30662\\
0.2	0.3398\\
0.3	0.35722\\
0.4	0.36942\\
0.5	0.39814\\
0.6	0.39564\\
};
\addplot [color=red,dashed,line width=1.0pt,mark size=1.2pt,mark=square*,mark options={solid,fill=red},forget plot]
  table[row sep=crcr]{0.1	0.2276\\
0.2	0.2641\\
0.3	0.2906\\
0.4	0.2896\\
0.5	0.3789\\
0.6	0.3374\\
};
\addplot [color=green,dashed,line width=1.0pt,mark size=1.8pt,mark=+,mark options={solid,fill=green},forget plot]
  table[row sep=crcr]{0.1	0.25292\\
0.2	0.27918\\
0.3	0.3017\\
0.4	0.29366\\
0.5	0.32246\\
0.6	0.3513\\
};
\addplot [color=mycolor1,dashed,line width=1.0pt,mark size=1.2pt,mark=triangle*,mark options={solid,rotate=90,fill=mycolor1},forget plot]
  table[row sep=crcr]{0.1	0.33302\\
0.2	0.33888\\
0.3	0.35342\\
0.4	0.37726\\
0.5	0.41762\\
0.6	0.40806\\
};
\addplot [color=mycolor2,dashed,line width=1.0pt,mark size=1.2pt,mark=triangle*,mark options={solid,fill=mycolor2},forget plot]
  table[row sep=crcr]{0.1	0.25432\\
0.2	0.27436\\
0.3	0.2978\\
0.4	0.32468\\
0.5	0.32006\\
0.6	0.3263\\
};
\addplot [color=mycolor3,dashed,line width=1.0pt,mark size=1.2pt,mark=triangle*,mark options={solid,rotate=270,fill=mycolor3},forget plot]
  table[row sep=crcr]{0.1	0.35058\\
0.2	0.34312\\
0.3	0.35494\\
0.4	0.37366\\
0.5	0.40008\\
0.6	0.4058\\
};
\addplot [color=mycolor4,dashed,line width=1.0pt,mark size=3.0pt,mark=diamond*,mark options={solid,fill=mycolor4},forget plot]
  table[row sep=crcr]{0.1	0.3436\\
0.2	0.34862\\
0.3	0.38158\\
0.4	0.38286\\
0.5	0.37984\\
0.6	0.42172\\
};
\end{axis}
\end{tikzpicture}%
	}
	\subfigure[AIRLINES]{
		\centering
%
%
%
\definecolor{mycolor1}{rgb}{0.00000,0.00000,0.17241}%
\definecolor{mycolor2}{rgb}{1.00000,0.10345,0.72414}%
\definecolor{mycolor3}{rgb}{0.00000,0.34483,0.00000}%
\definecolor{mycolor4}{rgb}{0.62069,0.31034,0.27586}%
\begin{tikzpicture}

\begin{axis}[%
width=\figurewidth,
height=\figureheight,
scale only axis,
xmin=0.1,
xmax=0.62,
ymin=0.6032,
ymax=0.6488,
axis x line*=bottom,
axis y line*=left
]
\addplot [color=blue,dashed,line width=1.0pt,mark size=1.8pt,mark=*,mark options={solid,fill=blue},forget plot]
  table[row sep=crcr]{0.1	0.6156\\
0.2	0.63\\
0.3	0.636\\
0.4	0.6418\\
0.5	0.6437\\
0.6	0.6471\\
};
\addplot [color=red,dashed,line width=1.0pt,mark size=1.2pt,mark=square*,mark options={solid,fill=red},forget plot]
  table[row sep=crcr]{0.1	0.6132\\
0.2	0.6251\\
0.3	0.6332\\
0.4	0.6236\\
0.5	0.6225\\
0.6	0.6243\\
};
\addplot [color=green,dashed,line width=1.0pt,mark size=1.8pt,mark=+,mark options={solid,fill=green},forget plot]
  table[row sep=crcr]{0.1	0.617\\
0.2	0.629\\
0.3	0.635\\
0.4	0.6433\\
0.5	0.6314\\
0.6	0.6382\\
};
\addplot [color=mycolor1,dashed,line width=1.0pt,mark size=1.2pt,mark=triangle*,mark options={solid,rotate=90,fill=mycolor1},forget plot]
  table[row sep=crcr]{0.1	0.6286\\
0.2	0.6332\\
0.3	0.6333\\
0.4	0.6364\\
0.5	0.6411\\
0.6	0.6429\\
};
\addplot [color=mycolor2,dashed,line width=1.0pt,mark size=1.2pt,mark=triangle*,mark options={solid,fill=mycolor2},forget plot]
  table[row sep=crcr]{0.1	0.6152\\
0.2	0.6269\\
0.3	0.6382\\
0.4	0.6399\\
0.5	0.6415\\
0.6	0.6408\\
};
\addplot [color=mycolor3,dashed,line width=1.0pt,mark size=1.2pt,mark=triangle*,mark options={solid,rotate=270,fill=mycolor3},forget plot]
  table[row sep=crcr]{0.1	0.6157\\
0.2	0.6263\\
0.3	0.6366\\
0.4	0.6414\\
0.5	0.6446\\
0.6	0.6463\\
};
\addplot [color=mycolor4,dashed,line width=1.0pt,mark size=3.0pt,mark=diamond*,mark options={solid,fill=mycolor4},forget plot]
  table[row sep=crcr]{0.1	0.6165\\
0.2	0.6333\\
0.3	0.6379\\
0.4	0.6431\\
0.5	0.6449\\
0.6	0.6461\\
};
\end{axis}
\end{tikzpicture}%
	}
	\subfigure[COD-RNA]{
		\centering
%
%
%
\definecolor{mycolor1}{rgb}{0.00000,0.00000,0.17241}%
\definecolor{mycolor2}{rgb}{1.00000,0.10345,0.72414}%
\definecolor{mycolor3}{rgb}{0.00000,0.34483,0.00000}%
\definecolor{mycolor4}{rgb}{0.62069,0.31034,0.27586}%
\begin{tikzpicture}

\begin{axis}[%
width=\figurewidth,
height=\figureheight,
scale only axis,
xmin=0.1,
xmax=0.62,
ymin=0.89764,
ymax=0.92408,
axis x line*=bottom,
axis y line*=left
]
\addplot [color=blue,dashed,line width=1.0pt,mark size=1.8pt,mark=*,mark options={solid,fill=blue},forget plot]
  table[row sep=crcr]{0.1	0.89864\\
0.2	0.90924\\
0.3	0.91398\\
0.4	0.91674\\
0.5	0.91976\\
0.6	0.92054\\
};
\addplot [color=red,dashed,line width=1.0pt,mark size=1.2pt,mark=square*,mark options={solid,fill=red},forget plot]
  table[row sep=crcr]{0.1	0.9007\\
0.2	0.9098\\
0.3	0.9169\\
0.4	0.9162\\
0.5	0.9198\\
0.6	0.9191\\
};
\addplot [color=green,dashed,line width=1.0pt,mark size=1.8pt,mark=+,mark options={solid,fill=green},forget plot]
  table[row sep=crcr]{0.1	0.9\\
0.2	0.91122\\
0.3	0.91492\\
0.4	0.9184\\
0.5	0.92024\\
0.6	0.91968\\
};
\addplot [color=mycolor1,dashed,line width=1.0pt,mark size=1.2pt,mark=triangle*,mark options={solid,rotate=90,fill=mycolor1},forget plot]
  table[row sep=crcr]{0.1	0.90324\\
0.2	0.91318\\
0.3	0.91668\\
0.4	0.91848\\
0.5	0.92048\\
0.6	0.9222\\
};
\addplot [color=mycolor2,dashed,line width=1.0pt,mark size=1.2pt,mark=triangle*,mark options={solid,fill=mycolor2},forget plot]
  table[row sep=crcr]{0.1	0.90166\\
0.2	0.91208\\
0.3	0.91688\\
0.4	0.91936\\
0.5	0.92002\\
0.6	0.92182\\
};
\addplot [color=mycolor3,dashed,line width=1.0pt,mark size=1.2pt,mark=triangle*,mark options={solid,rotate=270,fill=mycolor3},forget plot]
  table[row sep=crcr]{0.1	0.90462\\
0.2	0.91308\\
0.3	0.91648\\
0.4	0.92032\\
0.5	0.92086\\
0.6	0.92308\\
};
\addplot [color=mycolor4,dashed,line width=1.0pt,mark size=3.0pt,mark=diamond*,mark options={solid,fill=mycolor4},forget plot]
  table[row sep=crcr]{0.1	0.90308\\
0.2	0.91422\\
0.3	0.91562\\
0.4	0.91966\\
0.5	0.92282\\
0.6	0.92254\\
};
\end{axis}
\end{tikzpicture}%
	}
	\subfigure[COVERTYPE]{
		\centering
%
%
%
\definecolor{mycolor1}{rgb}{0.00000,0.00000,0.17241}%
\definecolor{mycolor2}{rgb}{1.00000,0.10345,0.72414}%
\definecolor{mycolor3}{rgb}{0.00000,0.34483,0.00000}%
\definecolor{mycolor4}{rgb}{0.62069,0.31034,0.27586}%
\begin{tikzpicture}

\begin{axis}[%
width=\figurewidth,
height=\figureheight,
scale only axis,
xmin=0.1,
xmax=0.62,
ymin=0.5747,
ymax=0.7813,
axis x line*=bottom,
axis y line*=left
]
\addplot [color=blue,dashed,line width=1.0pt,mark size=1.8pt,mark=*,mark options={solid,fill=blue},forget plot]
  table[row sep=crcr]{0.1	0.6446\\
0.2	0.6458\\
0.3	0.7101\\
0.4	0.7313\\
0.5	0.7424\\
0.6	0.7438\\
};
\addplot [color=red,dashed,line width=1.0pt,mark size=1.2pt,mark=square*,mark options={solid,fill=red},forget plot]
  table[row sep=crcr]{0.1	0.6542\\
0.2	0.7107\\
0.3	0.6786\\
0.4	0.7227\\
0.5	0.7296\\
0.6	0.7406\\
};
\addplot [color=green,dashed,line width=1.0pt,mark size=1.8pt,mark=+,mark options={solid,fill=green},forget plot]
  table[row sep=crcr]{0.1	0.6555\\
0.2	0.7028\\
0.3	0.6841\\
0.4	0.74\\
0.5	0.7084\\
0.6	0.7133\\
};
\addplot [color=mycolor1,dashed,line width=1.0pt,mark size=1.2pt,mark=triangle*,mark options={solid,rotate=90,fill=mycolor1},forget plot]
  table[row sep=crcr]{0.1	0.6591\\
0.2	0.668\\
0.3	0.6983\\
0.4	0.7168\\
0.5	0.7107\\
0.6	0.7411\\
};
\addplot [color=mycolor2,dashed,line width=1.0pt,mark size=1.2pt,mark=triangle*,mark options={solid,fill=mycolor2},forget plot]
  table[row sep=crcr]{0.1	0.6527\\
0.2	0.7154\\
0.3	0.673\\
0.4	0.7204\\
0.5	0.7435\\
0.6	0.7411\\
};
\addplot [color=mycolor3,dashed,line width=1.0pt,mark size=1.2pt,mark=triangle*,mark options={solid,rotate=270,fill=mycolor3},forget plot]
  table[row sep=crcr]{0.1	0.6924\\
0.2	0.7442\\
0.3	0.7114\\
0.4	0.7803\\
0.5	0.7346\\
0.6	0.7347\\
};
\addplot [color=mycolor4,dashed,line width=1.0pt,mark size=3.0pt,mark=diamond*,mark options={solid,fill=mycolor4},forget plot]
  table[row sep=crcr]{0.1	0.6865\\
0.2	0.6766\\
0.3	0.7222\\
0.4	0.7533\\
0.5	0.7452\\
0.6	0.7426\\
};
\end{axis}
\end{tikzpicture}%
	}
	\subfigure[ELECTRICITY]{
		\centering
%
%
%
\definecolor{mycolor1}{rgb}{0.00000,0.00000,0.17241}%
\definecolor{mycolor2}{rgb}{1.00000,0.10345,0.72414}%
\definecolor{mycolor3}{rgb}{0.00000,0.34483,0.00000}%
\definecolor{mycolor4}{rgb}{0.62069,0.31034,0.27586}%
\begin{tikzpicture}

\begin{axis}[%
width=\figurewidth,
height=\figureheight,
scale only axis,
xmin=0.1,
xmax=0.62,
ymin=0.7313,
ymax=0.8151,
axis x line*=bottom,
axis y line*=left
]
\addplot [color=blue,dashed,line width=1.0pt,mark size=1.8pt,mark=*,mark options={solid,fill=blue},forget plot]
  table[row sep=crcr]{0.1	0.7574\\
0.2	0.761\\
0.3	0.7635\\
0.4	0.7905\\
0.5	0.7983\\
0.6	0.7927\\
};
\addplot [color=red,dashed,line width=1.0pt,mark size=1.2pt,mark=square*,mark options={solid,fill=red},forget plot]
  table[row sep=crcr]{0.1	0.7513\\
0.2	0.762\\
0.3	0.7749\\
0.4	0.7758\\
0.5	0.7984\\
0.6	0.8004\\
};
\addplot [color=green,dashed,line width=1.0pt,mark size=1.8pt,mark=+,mark options={solid,fill=green},forget plot]
  table[row sep=crcr]{0.1	0.7609\\
0.2	0.7653\\
0.3	0.7723\\
0.4	0.7969\\
0.5	0.8141\\
0.6	0.7942\\
};
\addplot [color=mycolor1,dashed,line width=1.0pt,mark size=1.2pt,mark=triangle*,mark options={solid,rotate=90,fill=mycolor1},forget plot]
  table[row sep=crcr]{0.1	0.7592\\
0.2	0.7557\\
0.3	0.7665\\
0.4	0.7934\\
0.5	0.8045\\
0.6	0.7935\\
};
\addplot [color=mycolor2,dashed,line width=1.0pt,mark size=1.2pt,mark=triangle*,mark options={solid,fill=mycolor2},forget plot]
  table[row sep=crcr]{0.1	0.7517\\
0.2	0.7747\\
0.3	0.7861\\
0.4	0.7853\\
0.5	0.804\\
0.6	0.8091\\
};
\addplot [color=mycolor3,dashed,line width=1.0pt,mark size=1.2pt,mark=triangle*,mark options={solid,rotate=270,fill=mycolor3},forget plot]
  table[row sep=crcr]{0.1	0.773\\
0.2	0.7904\\
0.3	0.7941\\
0.4	0.7934\\
0.5	0.8125\\
0.6	0.8071\\
};
\addplot [color=mycolor4,dashed,line width=1.0pt,mark size=3.0pt,mark=diamond*,mark options={solid,fill=mycolor4},forget plot]
  table[row sep=crcr]{0.1	0.7691\\
0.2	0.7939\\
0.3	0.7864\\
0.4	0.8069\\
0.5	0.8052\\
0.6	0.8125\\
};
\end{axis}
\end{tikzpicture}%
	}
	\subfigure[LEGEND]{
		\centering
		\includegraphics[width=4cm,height=4cm]{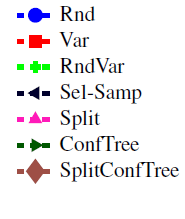}
	}
	\caption{
		\label{fig:ss}
		Online performance (accuracy or F-measure) against the labeling budget.
	} 
\end{figure*}		 
	 
\section{Selective Sampling Experiments}		 
\label{s:SS_exp}

The validation protocol of the active strategies experiment is described in Algorithm~\ref{alg:online-budget} (ACTIVE setting), where we used a different labeling strategy  ($\texttt{line}$ 12) for each compared approach. The labeled instances are stored and used to update the model. The query rate is upper bounded by an input budget parameter $B\in[0,1]$. In these experiments, we calculated the query rate as the fraction of instances for which a label was requested among the ones observed so far ---see~\cite{zliobaite2014active}. We compared our active strategy of Section~\ref{s:selective} against a baseline of five
techniques proposed in~\cite{zliobaite2014active}:

\smallskip

\noindent\textbf{\texttt{Rnd.}} The Random Strategy (Algorithm~\ref{alg:rand_strat}) is a naive method that queries the labels of incoming instances with probability equal to the query rate budget $B$ without considering the actual incoming instance $\bX_t$.

\begin{algorithm}[h]                   
\caption{Random Strategy}          
\label{alg:rand_strat}  
\begin{algorithmic}[1]                     
\REQUIRE labeling budget $B$
\ENSURE \texttt{labeling} $\in\{true, false\}$ indicates whether to
request the true label $y_t$ for $\bx_t$  
  	\STATE Generate a uniform random variable $\texttt{rand}\in[0,1]$;
  	\STATE \textbf{return} $\Ind{\texttt{rand}\le B}$ //where $\Ind\cdot$ is the Indicator Fun.
\end{algorithmic}
\end{algorithm}

\smallskip

\noindent\textbf{\texttt{VarUn.}} We used Variable Uncertainty Strategy described in Algorithm~\ref{alg:vu_strat} to decide for which instances manual annotation is requested. The confidence threshold $\Theta$ which determines requests for new labels, is continuously updated. If the classifier's confidence was above the current threshold $\Theta$ over the time interval associated with the last instance, the latter is increased by a fraction $s$ in order to query only the most uncertain instances. In the opposite case the threshold is reduced, with the goal of acquiring more labels in regions where the estimator is less confident. As explained in~\cite{zliobaite2014active} the parameter $s$ can be easily set to a default value $0.01$. We performed all the experiments with this setting.

\begin{algorithm}[h]                   
	\caption{Variable Uncertainty Strategy}          
	\label{alg:vu_strat}  
	\begin{algorithmic}[1]                     
		\REQUIRE incoming sample $\bx_t$, decision tree $T$, threshold adjustment step $s\in(0, 1]$ 
		\ENSURE \texttt{labeling} $\in\{true, false\}$
		\STATE \textbf{Initialize:} confidence threshold $\Theta=1$ and store
		the latest value during operation
		\STATE Route instance $\bX_t$ through $T$ until a leaf $\ell_t$ is reached 
		\IF {$\max\bigl\{p_{\ell_t},1-p_{\ell_t}\bigr\}<\Theta$}  
		\STATE // confidence below the threshold 
		\STATE decrease the confidence threshold $\Theta = (1-s)\Theta$
		\STATE \textbf{return} $true$
		\ELSE
		\STATE // confidence above the threshold
		\STATE increase the confidence threshold $\Theta = (1 + s)\Theta$
		\STATE \textbf{return} $false$
		\ENDIF
	\end{algorithmic}
\end{algorithm}

\smallskip

\noindent\textbf{\texttt{RndVar.}} This method is essentially the same as \texttt{VarUn} described above. \texttt{VarUn} always labels the instances that are close to the decision boundary. However, in data
streams changes may happen anywhere in the instance space. When concept drift~\cite{widmer1996learning} happens in labels, the classifier will not notice it without the true labels. In order
not to miss concept drift, this technique randomize the labeling threshold
by multiplying by a normally distributed random variable that
follows $\mathcal{N}(1, \delta=1)$. This way, the instances that are
close to the decision boundary are labeled more often, but occasionally
also some distant instances are annotated --see~\cite{zliobaite2014active} for more details.

\smallskip

\noindent\textbf{\texttt{Sel-Samp.}} The Selective Sampling method
is based on~\cite{cesa2006worst}, and uses a variable labeling threshold
$\frac{B}{B+\bigl|\Yhat_{\ell_t,t} - \tfrac{1}{2}\bigr|}$ similar to our random sampling mechanism for $\delta$-consistent leaves. The threshold is based on certainty expectations, and the labels are queried at random.

\smallskip

\noindent\textbf{\texttt{Split.}} Many adaptive learning methods use change-detection
mechanisms that monitor streaming error. Change detectors
(e.g., DDM~\cite{gama2004learning}) are built with an implicit assumption that the
errors are distributed uniformly over time unless a change has happened. The uncertainty strategy asks for labels based on a prediction. Since the predictive model adapts over time, the stream of labeled data is not distributed identically to the stream of unlabeled data. Thus, change detectors may have problems to distinguish a change in distribution due to active labeling from a change in distribution due to concept drift. To overcome that problem, this Split Strategy (Algorithm~\ref{alg:split_strat})
splits a stream at random into two streams. One of the new
streams is labeled according to the  Variable Uncertainty Strategy, while
the other is labeled according to the Random Strategy. Both
streams are used to train a classifier. But only the random
stream is used for change detection. In the experiments we set the parameter $\nu=.2$.  
\begin{algorithm}[h]                   
	\caption{Split Strategy}          
	\label{alg:split_strat}  
	\begin{algorithmic}[1]                     
		\REQUIRE incoming sample $\bx_t$, decision tree $T$, threshold adjustment step $s\in(0, 1]$, proportion of
		random labeling $\nu\in(0,1]$  
		\ENSURE \texttt{labeling} $\in\{true, false\}$ 
		\STATE \textbf{Initialize:} confidence threshold $\Theta=1$ and store
			the latest value during operation
		\STATE Route instance $\bX_t$ through $T$ until a leaf $\ell_t$ is reached
		\STATE Generate a uniform random variable $\texttt{rand}\in[0,1]$;
		\IF {$\texttt{rand}\le\nu$}
			\STATE \texttt{Change Detection Method()}
			\STATE \textbf{return} \texttt{Rnd}($B$)
		\ELSE
			\STATE \textbf{return} \texttt{VarUn}($\bx_i$,$T$,$s$)
		\ENDIF		
	\end{algorithmic}
\end{algorithm}

\smallskip

We compared against the above baseline our Confidence Tree Strategy (Algorithm~\ref{alg:confTree}) that we define as \texttt{ConfTree}. We also coupled the Split Strategy (Algorithm~\ref{alg:split_strat}) with our approach substituting the \texttt{VarUn} procedure with \texttt{ConfTree} method, we define this approach \texttt{SplitConfTree}. All our experiments was performed using the MOA data
stream software suite. We added change detection to the base classifier to improve its performance. We chose DDM~\cite{gama2004learning} as in~\cite{zliobaite2014active}.
All the tested ACTIVE strategies used C-Tree (Algorithm~1) as the base learner with the same parameters setting of Section~\ref{s:exp}. The algorithms  had to predict the label of each new incoming sample. After each prediction, if the active learning system requested the true label, the sample together with its label were fed to it as a new training example ---see Algorithm~\ref{alg:online-budget}. We ran all the competing algorithms with the same range  $B\in\{.1,.2,.3,.4,.5,.6\}$\footnote{Note that the budget is only an upper limit to the actual query rate -- algorithms generally ask for a smaller number of annotations.} of budget values, and plotted in Figure~\ref{fig:ss} the resulting online accuracy as a function of the labeling budget. As for the FULL sampling experiments we deactivate the tie-break mechanism and we tuned the tree learning parameters. The datasets were not randomized to keep unchanged the data order allowing to analyze the methods behaviour in case of original concept drift. Although the performances slightly oscillate probably due to the splitting mechanism that could be sensible to the sub-sampled data class distributions, all the plots exhibit rising accuracy trend as the budget increases, which is to be expected. If there was no upward tendency, then we would conclude that we have excess data and we should be able to achieve a sufficient accuracy by a simple random sub-sampling. The plots clearly show how our approaches remain consistent in the various dataset scenarios in respect to the baseline methods which alternate the best performers ordering depending on the dataset. This reflects that considering more statistical information, as our methods do, permits to achieve more  robust performances. Furthermore, if we compare the performances of Figure~\ref{fig:full} and Figure~\ref{fig:ss}, we can argue that a small fraction of the available labels is sufficient to achieve the performance close to that of the full sampling algorithms.

\section{Conclusions and future works}
\label{s:concl}
The goal of this work was to provide a more rigorous statistical analysis of confidence intervals for splitting leaves in decision trees. Our confidence bounds take into account all the relevant variables of the problem. This improved analysis is reflected in the predictive ability of the learned decision trees, as we show in the experiments. It is important to note that the proposed bounds can be easily applied to the many proposed variants of VFDT. Furthermore, we showed how these bounds can be used to save labels in online decision
tree learning through a selective sampling technique. Both FULL and ACTIVE
applications are supported by theoretical and empirical results. Our confidence analysis applies to i.i.d. streams. Extending
our results to more general processes remains an open problem.
The selective sampling results raise interesting issues concerning
the interplay between nonparametric learning models (such
as decision trees) and subsampling techniques. For example,
there are no theoretical bounds showing the extent to which
labels can be saved without significantly hurting performance.

\appendices

\section{Proof Theorem 1}
\label{pr:entropy}

Let $H$ be the standard (unscaled) entropy. Using the standard identity $H\bigl(Y_{|i} \mid F\bigr) = H\bigl(Y_{|i},F\bigr) - H(F)$, we have $\Dhat_{i \mid F} = \sHhat\bigl(Y_{|i},F\bigr) - \sHhat(F)$. We now use part~(iii) of the remark following \cite[Corollary~1]{antos2001convergence}, we have that
\begin{align*}
    \Bigl|\sHhat(F) - \E\,\sHhat(F)\Bigr| &\le \frac{\ln m}{2}\sqrt{\frac{2}{m}\ln\frac{4}{\delta}}
\\
    \Bigl|\sHhat(Y_{|i},F) - \E\,\sHhat(Y_{|i},F)\Bigr| &\le \frac{\ln m}{2}\sqrt{\frac{2}{m}\ln\frac{4}{\delta}}
\end{align*}
simultaneously hold with probability at least $1-\delta$. These bounds hold irrespective to the size of the sets in which $Y_{|i}$ and $F$ take their values.

Next, we apply~\cite[Proposition~1]{paninski2003estimation}, which states that
\[
    -\ln\left(1 + \frac{N-1}{m}\right) \le \E\,\Hhat(Z) - H(Z) \le 0
\]
for any random variable $Z$ which takes $N$ distinct values. In our case, $N=2$ for $Z = F$ and $N=4$ for $Z = \bigl(Y_{|i},F\bigr)$. Hence, using $-a \le -\ln(1+a)$ for all $a$, we get
\begin{align*}
    \Bigl|\sH(F) - \E\,\sHhat(F)\Bigr| &\le \frac{1}{2m}
\\
    \Bigl|\sH(Y_{|i},F) - \E\,\sHhat(Y_{|i},F)\Bigr| &\le \frac{3}{2m}~.
\end{align*}
Putting everything together gives the desired result.

\section{Proof Theorem 2}
\label{pr:giny}
\begin{lemma}[McDiarmid's inequality]
\label{lm:mcdiarmid}
Let $G$ be a real function of $m$ independent random variables $X_1,\ldots,X_m$ such that
\begin{equation}
\label{eq:perturb}
    \Bigl| G(x_1,\dots,x_i,\dots,x_m) - G(x_1,\dots,x_i',\dots,x_m)\Bigr| \le c
\end{equation}
for some constant $c\in\R$ and for all realizations $x_1,\dots,x_i,x_i',\dots,x_m$. Then 
\[
    \Pr\bigl( \big|G - \E\,G\big| \ge \epsilon\bigr)
\le
    2\exp\left(\frac{-2\epsilon^2}{m\,c^2}\right)~.
\]
If we set the right-hand side equal to $\delta$, then
\[
    \bigl|G - \E\,G\bigr| \le c \sqrt{ \frac{m}{2}\ln\frac{2}{\delta} }
\]
is true with probability at least $1-\delta$.
\end{lemma}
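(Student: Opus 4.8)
The plan is to prove this via the standard martingale (Azuma--Hoeffding) route, since the only hypotheses are the bounded-differences condition~(\ref{eq:perturb}) and the independence of $X_1,\dots,X_m$. First I would set up the Doob martingale associated with $G$: writing $V_i = \E\bigl[G \mid X_1,\dots,X_i\bigr]$ for $i=0,1,\dots,m$, we have $V_0 = \E\,G$ and $V_m = G$, so that $G - \E\,G = \sum_{i=1}^m D_i$ where the martingale differences $D_i = V_i - V_{i-1}$ satisfy $\E\bigl[D_i \mid X_1,\dots,X_{i-1}\bigr] = 0$.

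The crucial step is to show that each $D_i$, conditioned on $X_1,\dots,X_{i-1}$, takes values in an interval of length at most $c$. This is where independence enters: because the $X_j$ are independent, conditioning on $X_i=x$ and integrating out the later variables expresses $V_i$ as an explicit function $g_i(X_1,\dots,X_i)$, and the pointwise perturbation hypothesis~(\ref{eq:perturb}) forces $\sup_x g_i(\dots,x) - \inf_{x'} g_i(\dots,x') \le c$ for every fixed $X_1,\dots,X_{i-1}$. I expect this to be the main obstacle, since it requires carefully pushing the pointwise bound on $G$ through a conditional expectation, and it is exactly the point at which independence cannot be dispensed with.

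Given the bounded range of each $D_i$, I would apply Hoeffding's lemma to the conditional moment generating function: a zero-mean random variable supported on an interval of length $c$ satisfies $\E\bigl[e^{sD_i} \mid X_1,\dots,X_{i-1}\bigr] \le e^{s^2 c^2/8}$ for all $s\in\R$. Iterating this estimate through the tower rule, conditioning successively on $X_{m-1},X_{m-2},\dots,X_1$, yields $\E\,e^{s(G - \E\,G)} \le e^{s^2 m c^2/8}$.

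Finally I would close with a Chernoff argument. By Markov's inequality, $\Pr(G - \E\,G \ge \epsilon) \le e^{-s\epsilon}\,\E\,e^{s(G-\E\,G)} \le \exp\bigl(-s\epsilon + s^2 m c^2/8\bigr)$; optimizing over $s$ (the choice $s = 4\epsilon/(mc^2)$) gives the bound $\exp\bigl(-2\epsilon^2/(mc^2)\bigr)$. Running the identical argument on $-G$ and taking a union bound produces the two-sided inequality with the factor $2$. The last display then follows by setting $2\exp\bigl(-2\epsilon^2/(mc^2)\bigr) = \delta$ and solving for $\epsilon$, which is routine algebra.
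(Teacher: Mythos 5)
Your proof is correct: the Doob-martingale decomposition, the use of independence to show each difference $D_i$ is conditionally supported on an interval of length at most $c$, Hoeffding's lemma on the conditional moment generating function, the Chernoff optimization $s=4\epsilon/(mc^2)$ yielding $\exp\bigl(-2\epsilon^2/(mc^2)\bigr)$, and the union bound for two-sidedness all check out, as does the final algebra converting the tail bound into the confidence form. Note, however, that the paper offers no proof of this lemma at all: it is quoted as a known classical result (McDiarmid's inequality, per the citation to McDiarmid's survey in the related work) and used as a black box in the proof of Theorem~2. So there is nothing in the paper to compare against; what you have reconstructed is the standard Azuma--Hoeffding argument, essentially McDiarmid's original proof, and you correctly identified the one genuinely delicate point --- pushing the pointwise bounded-differences condition through the conditional expectations, which is exactly where independence is indispensable. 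Your proof also covers only the uniform-constant case stated here, which suffices; the general inequality with per-coordinate constants $c_i$ follows by the identical argument with $mc^2$ replaced by $\sum_i c_i^2$.
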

Note the following fact
\begin{align}
\nonumber
    J&(Y_{|i} \mid F)
\\ &=
\nonumber
    \Pr(F = 1)\,2\,\frac{\Pr(Y_{|i} = 1, F = 1)}{\Pr(F = 1)}\,\frac{\Pr(Y_{|i} = 0, F = 1)}{\Pr(F = 1)}
\\ &
\nonumber
    + \Pr(F = 0)\,2\,\frac{\Pr(Y_{|i} = 1, F = 0)}{\Pr(F = 0)}\,\frac{\Pr(Y_{|i} = 0, F = 0)}{\Pr(F = 0)}
\\ &=
\nonumber
    2\,\frac{\Pr(Y_{|i} = 1, F = 1)\,\Pr(Y_{|i} = 0, F = 1)}{\Pr(F = 1)}
\\ &
\nonumber
    + 2\,\frac{\Pr(Y_{|i} = 1, F = 0)\,\Pr(Y_{|i} = 0, F = 0)}{\Pr(F = 0)}
\\ &=
\label{eq:mylast}
    \hm(p_1,q_1) + \hm(p_0,q_0)~.
\end{align}
In view of applying McDiarmid inequality, let $\phat_k = \tfrac{r}{m}$ and $\qhat_k = \tfrac{s}{m}$. We can write the left-hand side of condition~(\ref{eq:perturb}) in Lemma~\ref{lm:mcdiarmid} for each term of~(\ref{eq:mylast}) as
\begin{align*}
   \frac{2}{m}\left| \frac{rs}{r+s} - \frac{r's'}{r'+s'} \right|
\end{align*}
where $r,s = 1,\dots,m$ and $r',s'$ may take the following forms: $(r+1,s-1)$ ---when a label of an example in the current leaf is flipped, $(r+1,s)$ ---when an example is moved from the sibling leaf to the current leaf, and $(r-1,s)$ ---when an example is moved from the current leaf to the sibling leaf. Since the harmonic mean is symmetric in $r$ and $s$, we can ignore the cases $(r-1,s+1)$, $(r,s+1)$, and $(r,s-1)$. A tedious but simple calculation shows that
\begin{align*}
    \left| \frac{rs}{r+s} - \frac{r's'}{r'+s'} \right| \le 1~.
\end{align*}
Therefore, we may apply Lemma~\ref{lm:mcdiarmid} with $c = \tfrac{4}{m}$ and obtain that
\begin{align}
\label{eq:gini-var}
    \Bigl|\Dhat_{i,F} - \E\,\Dhat_{i,F}\Bigr| \le \sqrt{\frac{8}{m}\ln\frac{2}{\delta}}
\end{align}
holds with probability at least $1-\delta$.

Next, we control the bias of $\hm\bigl(\phat_k,\qhat_k\bigr)$ as follows,
\begin{align}
\nonumber
  &  0
\le
    \hm(p_k,q_k) - \E\Bigl[\hm\bigl(\phat_k,\qhat_k\bigr)\Bigr]
\\&=
    2\frac{p_kq_K}{p_k+q_k} - 2\E\left[\frac{\phat_k\qhat_k}{\phat_k+\qhat_k}\right]
\\ &=
\nonumber
    2\E\left[\frac{p_k\phat_k(q_k-\qhat_k) + q_k\qhat_k(p_k-\phat_k)}{(p_k+q_k)\bigl(\phat_k+\qhat_k\bigr)}\right]
\\ &\le
\nonumber
    2\E\left|q_k-\qhat_k\right| + 2\E\bigl|p_k-\phat_k\bigr|
\\ &\le
\nonumber
    2\sqrt{\E\left[\bigl(q_k-\qhat_k\bigr)^2\right]} + 2\sqrt{\E\left[\bigl(p_k-\phat_k\bigr)^2\right]}
\\ &\le
\label{eq:gini-bias}
    \frac{2}{\sqrt{m}}
\end{align}
where the first inequality is due to the concavity of $\hm$. Combining~(\ref{eq:gini-var}) and~(\ref{eq:gini-bias}) concludes the proof.

\section{Proof Theorem 3}
\label{pr:kearns}
\begin{lemma}
\label{lm:okamoto}
Let $B$ binomially distributed with parameters $(m,p)$. Then
\[
    \left|\sqrt{\frac{B}{m}} - \sqrt{p}\right| \le \sqrt{ \frac{1}{m}\ln\frac{2}{\delta} }
\]
is true with probability at least $1-\delta$.
\end{lemma}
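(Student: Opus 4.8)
The plan is to reduce the two-sided square-root deviation to a pair of one-sided binomial tail probabilities, each controlled by a Chernoff bound in relative-entropy form; this is precisely the mechanism behind the classical inequality of Okamoto, after which the lemma is named. Writing $\phat = B/m$ and fixing $\epsilon > 0$, I would first use that squaring is monotone on the nonnegatives to rewrite the two events as $\{\sqrt{\phat} - \sqrt{p} \ge \epsilon\} = \{\phat \ge (\sqrt{p}+\epsilon)^2\}$ and, when $\epsilon \le \sqrt{p}$, $\{\sqrt{p} - \sqrt{\phat} \ge \epsilon\} = \{\phat \le (\sqrt{p}-\epsilon)^2\}$.

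Next I would invoke the Chernoff--Hoeffding bound for a binomial average, $\Pr(\phat \ge a) \le e^{-m\RE{a}{p}}$ for $a \ge p$ and $\Pr(\phat \le a) \le e^{-m\RE{a}{p}}$ for $a \le p$, where $\RE{a}{p}$ denotes the binary relative entropy. Applying it with $a = (\sqrt{p}\pm\epsilon)^2$ turns the task into lower bounding $\RE{a}{p}$, for which I would use the standard estimate that the relative entropy dominates twice the squared Hellinger distance,
\[
    \RE{a}{p} \ge \bigl(\sqrt{a}-\sqrt{p}\bigr)^2 + \bigl(\sqrt{1-a}-\sqrt{1-p}\bigr)^2 \ge \bigl(\sqrt{a}-\sqrt{p}\bigr)^2 = \epsilon^2 .
\]
Each one-sided tail is therefore at most $e^{-m\epsilon^2}$, so a union bound gives $\Pr\bigl(|\sqrt{\phat}-\sqrt{p}| \ge \epsilon\bigr) \le 2e^{-m\epsilon^2}$; equating the right-hand side to $\delta$ and solving for $\epsilon$ yields exactly $\epsilon = \sqrt{\tfrac{1}{m}\ln\tfrac{2}{\delta}}$, the claimed bound.

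The hard part is the relative-entropy lower bound: the weak form $\RE{a}{p} \ge (\sqrt{a}-\sqrt{p})^2$ is the single genuinely analytic step, and it already lands exactly on the stated constant $\tfrac{1}{m}$. It can be proved directly by a one-variable convexity argument comparing $\RE{\cdot}{p}$ with the Bhattacharyya coefficient, or simply imported from Okamoto's sharper estimate $\RE{(\sqrt{p}\pm\epsilon)^2}{p} \ge 2\epsilon^2$, which would even improve the constant to $\tfrac{1}{2m}$ and hence still imply the statement. The only remaining care is with the degenerate ranges $\epsilon > \sqrt{p}$ for the lower tail and $(\sqrt{p}+\epsilon)^2 > 1$ for the upper tail, where the corresponding event is empty and the bound holds trivially.
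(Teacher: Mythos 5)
Your argument is correct and essentially identical to the paper's: both reduce the statement to one-sided binomial tails via the Chernoff bound in relative-entropy form, lower bound the binary KL divergence by the squared difference of square roots (you via the Hellinger/Bhattacharyya route, the paper by importing Okamoto's inequalities directly), and finish with a union bound and solving $2e^{-m\ve^2}=\delta$. One caveat on your closing aside: Okamoto's sharper estimate $\RE{(\sqrt{p}-\ve)^2}{p}\ge 2\ve^2$ holds only for the \emph{lower} tail, while the upper tail satisfies only the constant-$1$ bound $\RE{(\sqrt{p}+\ve)^2}{p}\ge\ve^2$, so the two-sided bound cannot be improved to $\sqrt{\tfrac{1}{2m}\ln\tfrac{2}{\delta}}$ by this route --- though this does not affect your main argument, which correctly uses the constant-$1$ bound for both tails.
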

\begin{proof}
The result is an immediate consequence of the bounds in~\cite{okamoto1959some} ---see also~\cite[Exercise~2.13]{BLM13}. In particular,
\[
    \Pr\left( \left|\frac{B}{m} - p\right| \ge \ve \right) \le e^{-mD(p+\ve\|p)} + e^{-mD(p-\ve\|p)}
\]
where $D(q\|p) = q\ln\tfrac{q}{p} + (1-q)\ln\tfrac{1-q}{1-p}$ is the KL divergence, and
\begin{align*}
    D(p+\ve\|p) &\ge \left(\sqrt{p+\ve} - \sqrt{p}\right)^2
\\
    D(p-\ve\|p) &\ge 2\left(\sqrt{p-\ve} - \sqrt{p}\right)^2~.
\end{align*}
Simple algrebraic manipulation concludes the proof.
\end{proof}
Similarly to the proof of Theorem~\ref{th:gini}, note that
\begin{align*}
    Q&(Y_{|i} \mid F)
\\ &=
    \Pr(F = 1)\sqrt{\frac{ \Pr(Y_{|i} = 1, F = 1)}{\Pr(F = 1)}\,\frac{\Pr(Y_{|i} = 0, F = 1)}{\Pr(F = 1)} }
\\ &
    + \Pr(F = 0)\sqrt{ \frac{\Pr(Y_{|i} = 1, F = 0)}{\Pr(F = 0)}\,\frac{\Pr(Y_{|i} = 0, F = 0)}{\Pr(F = 0)} }
\\ &=
    \sqrt{ \Pr(Y_{|i} = 1, F = 1)\,\Pr(Y_{|i} = 0, F = 1) }
\\ &
    + \sqrt{ \Pr(Y_{|i} = 1, F = 0)\,\Pr(Y_{|i} = 0, F = 0) }
\\ &=
    \sqrt{p_1q_1} + \sqrt{p_0q_0} 
\end{align*}
Then
\begin{align*}
   & \Bigl| \sqrt{\phat_1\qhat_1} + \sqrt{\phat_0\qhat_0} - \sqrt{p_1q_1} - \sqrt{p_0q_0} \Bigr|
\\ &\le
    \Bigl| \sqrt{\phat_1\qhat_1} - \sqrt{p_1q_1} \Bigr| + \Bigl| \sqrt{\phat_0\qhat_0} - \sqrt{p_0q_0} \Bigr|
\le
    4\ve
\end{align*}
whenever $\bigl|\sqrt{\phat_k} - \sqrt{p_k}\bigr| \le \ve$ and $\bigl|\sqrt{\qhat_k} - \sqrt{q_k}\bigr| \le \ve$ for $k\in\bool$. Using the union bound and Lemma~\ref{lm:okamoto} we immediately get
\[
    \Bigl| \sqrt{\phat_1\qhat_1} + \sqrt{\phat_0\qhat_0} - \sqrt{p_1q_1} - \sqrt{p_0q_0} \Bigr|
\le
    4\sqrt{ \frac{1}{m}\ln\frac{8}{\delta} }
\]
thus concluding the proof.

\section{Proof Theorem 4}
\label{pr:sub_opt}

Fix some arbitrary tree $T$ of depth $H$ and let $\scD_h$ be the set of internal nodes at depth $h$. Clearly,
\[
    \sum_{i\in\scD_h} \Pr(\bX\to i) \le 1~.
\]
Now, for any internal node $i$ of $T$, let $F_i$ be the split used at that node. We have
\begin{align*}
    \Pr&\bigl(\text{$\bX$ routed via a $\tau$-suboptimal split}\bigr)
\\&=
     \Pr\bigl(\exists i \; : \; \bX\to i,\; G_{i,F_i} + \tau < \max_{F\in\scF} G_{i,F}\bigr)
\\ &\le
    \sum_i \Pr\bigl(G_{i,F_i} + \tau < \max_{F\in\scF} G_{i,F} \mid \bX\to i \bigr)\Pr(\bX\to i)
\\ &=
    \sum_{h=0}^H \sum_{i\in\scD_h} \Pr\bigl(G_{i,F_i} + \tau < \max_{F\in\scF} G_{i,F} \mid \bX\to i \bigr)\Pr(\bX\to i)
\\ &\le
    \sum_{h=0}^H \sum_{i\in\scD_h} \Pr\Biggl( \Dhat_{i\mid\Fhat} > \Dhat_{i\mid F}
\\ & \qquad
    - 2\ve\left(m_{i,t},\frac{\delta/\bigl[(h+1)(h+2)\bigr]}{tdm_{i,t}}\right) \Bigg|\, \bX\to i \Biggr) \Pr(\bX\to i) 
\\ &\le
    \sum_{h=0}^H \sum_{i\in\scD_h} \sum_{s=0}^{t-1} \Pr\Biggl( \Dhat_{i\mid\Fhat} > \Dhat_{i\mid F}
\\ & \qquad
    - 2\ve\left(s,\frac{\delta/\bigl[(h+1)(h+2)\bigr]}{tds}\right) \Bigg|\, \bX\to i \Biggr) \Pr(\bX\to i)
\\ &\le
    \sum_{h=0}^H \sum_{i\in\scD_h} \sum_{s=0}^{t-1} \frac{\delta}{(h+1)(h+2)t} \Pr(\bX\to i)
\quad
        \text{(by Lemma~\ref{l:aux})}
\\ &\le
    \sum_{h=0}^H \sum_{s=0}^{t-1}  \frac{\delta}{(h+1)(h+2)t} \le \delta~.
\end{align*}

\section{Proof Theorem 5}
\label{pr:cons_sub_opt}

\begin{align*}
   & \Pr\bigl( f_T(\bX_t) \neq y_{\ell_t}^*,\; \text{$\ell_t$ is $\delta$-consistent}\bigr)
\\&\le
    \Pr\left( \bigl| \Yhat_{\ell_t,t} - p_{\ell_t} \bigr| \ge \sqrt{\frac{1}{2m_{\ell_t,t}}\ln\frac{2t}{\delta}} \right)
\\ &\le
    \sum_{s=0}^{t-1} \Pr\left( \bigl| \Yhat_{\ell_t,t} - p_{\ell_t} \bigr| \ge \sqrt{\frac{1}{2s}\ln\frac{2t}{\delta}} \right)
\\ &\le
    \sum_{s=0}^{t-1} \sum_{i \in \scL(T)} \Pr\left( \left. \bigl| \Yhat_{i,t} - p_i \bigr| \ge \sqrt{\frac{1}{2s}\ln\frac{2t}{\delta}} \,\right| \bX_t \to i \right) \\ & \qquad \Pr(\bX_t \to i)
\\ &\le
    \sum_{s=0}^{t-1} \sum_{i \in \scL(T)} \frac{\delta}{t} \Pr(\bX_t \to i) = \delta
\end{align*}
where we used the standard Chernoff bound in the last step.


\bibliographystyle{plain}

\begin{IEEEbiography}[{\includegraphics[width=1in,height=1.2in,clip,keepaspectratio]{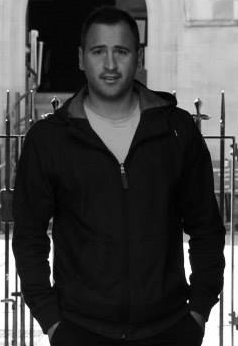}}]{Rocco De Rosa}	received the Ph.D. degree from
University of Milan, Milan, Italy, in 2014.
	He is currently a Researcher in Machine Learning Theory and Application at University of Milan.
	His current research interests include mining evolving streaming data, change
	detection, multivariate time series classification, online learning, stream mining,  adaptive learning, and predictive analytics applications.
\end{IEEEbiography}

\end{document}